\pgfplotsset{compat=newest}
\newtheorem{prop}{Proposition}
\newcommand{\iou}{\mathit{IoU}}
\newcommand{\auroc}{\mathit{AUROC}}
\newcommand{\map}{\mathit{mAP}}
\newcommand{\fone}{\mathit{F}_1}
\newcommand{\tpl}{\mathit{TP_l}}
\newcommand{\fpl}{\mathit{FP_l}}
\newcommand{\fnl}{\mathit{FN_l}}
\newcommand{\DKL}{D_\mathrm{KL}}
\newcommand{\CE}{\ell_{\mathrm{CE}}}
\newcommand{\pf}{p_\mathrm{F}}
\newcommand{\scH}{\mathscr{H}}
\DeclareMathOperator*{\argmax}{arg\, max}
\crefname{section}{Sec.}{Secs.}
\Crefname{section}{Section}{Sections}
\Crefname{table}{Table}{Tables}
\crefname{table}{Tab.}{Tabs.}
\begin{document}

\title{Identifying Label Errors in Object Detection Datasets by Loss Inspection}

\author{Marius Schubert \\
  IZMD, University of Wuppertal\\
  Germany \\
  {\tt\small schubert@math.uni-wuppertal.de}
  \and
  Tobias Riedlinger \\
  IZMD, University of Wuppertal\\
  Germany \\
  {\tt\small riedlinger@math.uni-wuppertal.de}
  \and
  Karsten Kahl \\
  IZMD, University of Wuppertal\\
  Germany \\
  {\tt\small kkahl@math.uni-wuppertal.de}
  \and
  Daniel Kröll \\
  ControlExpert GmbH \\
  Germany \\
  {\tt\small d.kroell@controlexpert.com}
  \and
  Sebastian Schoenen \\
  ControlExpert GmbH \\
  Germany \\
  {\tt\small s.schoenen@controlexpert.com}
  \and
  Siniša Šegvić \\
  University of Zagreb \\
  Croatia \\
  {\tt\small sinisa.segvic@fer.hr}
  \and
  Matthias Rottmann \\
  IZMD, University of Wuppertal\\
  Germany \\
  {\tt\small rottmann@math.uni-wuppertal.de}
}
\maketitle

\begin{abstract}
Labeling datasets for supervised object detection is a dull and time-consuming task.
Errors can be easily introduced during annotation and overlooked during review, yielding inaccurate benchmarks and performance degradation of deep neural networks trained on noisy labels.
In this work, we introduce a benchmark for label error detection methods on object detection datasets as well as a theoretically underpinned label error detection method and a number of baselines.
We simulate four different types of randomly introduced label errors on train and test sets of well-labeled object detection datasets. 
For our label error detection method we assume a two-stage object detector to be given and consider the sum of both stages' classification and regression losses.
The losses are computed with respect to the predictions and the noisy labels including simulated label errors, aiming at detecting the latter.
We compare our method to four baselines: a naive one without deep learning, the object detector's score, the entropy of the classification softmax distribution and a probability margin based method from related work.
We outperform all baselines and demonstrate that among the considered methods, ours is the only one that detects label errors of all four types efficiently, which we also derive theoretically.
Furthermore, we detect real label errors a) on commonly used test datasets in object detection and b) on a proprietary dataset. In both cases we achieve low false positives rates, i.e.,\ 
we detect label errors with a precision for a) of up to 71.5\% and for b) with 97\%.
\end{abstract}

\section{Introduction}
\label{sec: introduction}

\noindent
Nowadays, the predominant paradigm in computer vision is to learn models from data. 
The performance of the model largely depends on the amount of data and its quality, \ie the 
diversity of input images and label accuracy~\cite{feng2020deep, hussain2018autonomous, jaeger2020retina, kaur2021survey, kuutti2020survey}. 
Deep neural networks (DNNs) are particularly data hungry~\cite{sun2017revisiting}. 
In this work, we focus on the case of object detection where multiple objects per scene belonging to a fixed set of classes are annotated via bounding boxes~\cite{Everingham10,riedlinger2022activelearning}.

\begin{figure}
    \centering
    \includegraphics[trim={0 0 0 0},clip,width=.4\textwidth]{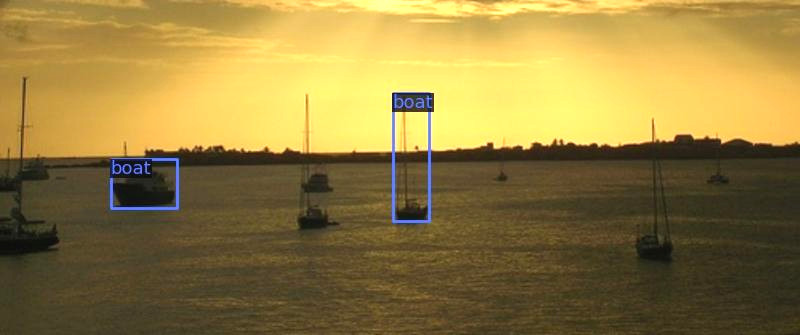}
    \caption{Example image from the Pascal VOC 2007 test dataset with two labeled boats marked by the blue boxes and multiple unlabeled boats.}
    \label{fig: boat}
\end{figure}

In many industrial and scientific applications, the labeling process consists of an iterative cycle of data acquisition, labeling, quality assessment, and model training.
Labeling data is costly, time consuming and error prone, e.g.\ due to inconsistencies caused by multiple human labelers or a change in label policy over time.
Therefore, at least a partial automation of the label process is desirable.
One research direction that aims at this goal is automated label error detection~\cite{northcutt2021confident,rottmann2022automated,dickinson2003detecting}.
The extent to which noisy labels affect the model performance is studied by~\cite{wu2018soft,buttner2023impact}.
Wu \etal~\cite{wu2018soft} observe that the model is able to tolerate a certain amount of missing annotations in training data without losing too much performance on Pascal VOC and Open Images V3 test sets.
In contrast, Buttner \etal~\cite{buttner2023impact} show that inaccurate labels in terms of annotations size in training data yields to significant decrease of test performance for calculus detection on bitewing radiographs.
Other methods model label uncertainty~\cite{riedlinger2022uncertainty,miller2018dropout} or improve robustness w.r.t.\ noisy labels~\cite{li2020learning,feng2021labels,zhang2019towards,chadwick2019training}.

Up to now, automated detection of label errors has received less attention. 
There exist some works on image classification datasets~\cite{northcutt2021pervasive,northcutt2021confident,2022multilabel_labelerrors}, one work on semantic segmentation datasets~\cite{rottmann2022automated} and some works for object detection \cite{hu2022probability,koksal2020effect}.
Label errors may affect generalization performance, which makes their detection desirable \cite{northcutt2021pervasive}. 
Furthermore, there is business interest in improving and accelerating the review process by partial automation.

\begin{figure*}
    \centering
    \begin{subfigure}[c]{0.2\textwidth}
    \includegraphics[trim={0 0.8cm 0 0.8cm},clip,width=\textwidth]{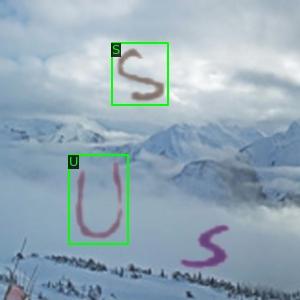}
    \subcaption{Label Drop}
    \end{subfigure}
    \begin{subfigure}[c]{0.2\textwidth}
    \includegraphics[trim={0 0.8cm 0 0.8cm},clip,width=\textwidth]{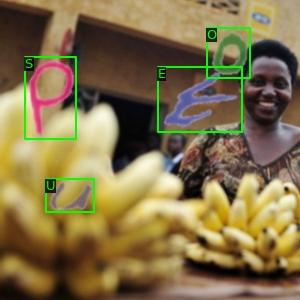}
    \subcaption{Class Flip}
    \end{subfigure}
    \begin{subfigure}[c]{0.2\textwidth}
    \includegraphics[trim={0 0.8cm 0 0.8cm},clip,width=\textwidth]{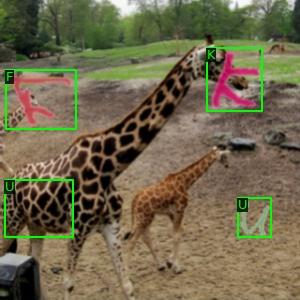}
    \subcaption{Label Spawn}
    \end{subfigure}
    \begin{subfigure}[c]{0.2\textwidth}
    \includegraphics[trim={0 0.8cm 0 0.8cm},clip,width=\textwidth]{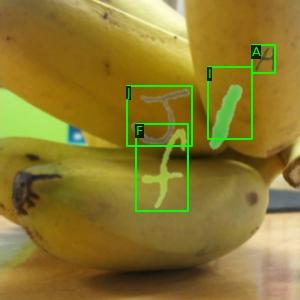}
    \subcaption{Loc. Inaccuracy}
    \end{subfigure}
    \caption{Examples of the different types of simulated label errors. The images are from the EMNIST-Det test dataset~\cite{riedlinger2022activelearning}.}
    \label{fig: label error types}
\end{figure*}

Here, we study the task of label error detection in object detection datasets by a) introducing a benchmark and b) developing a detection method and compare it against four baselines.
We introduce a benchmark by simulating label errors on the BDD100k~\cite{yu2020bdd100k} and EMNIST-Det~\cite{riedlinger2022activelearning} dataset.
The latter is a semi-synthetic dataset consisting of EMNIST letters~\cite{cohen2017emnist} pasted into COCO images~\cite{lin2014microsoft} of which we expect to possess highly accurate labels.
The types of label errors that we consider are missing labels (\emph{drops}), correct localization but wrong classification (\emph{flips}), correct classification but inaccurate localization (\emph{shifts}), and labels that actually represent background (\emph{spawns}).
We address the detection of these errors by a novel method based on monitoring instance-wise object detection loss.
We study the effectiveness of our method in comparison to four baselines.
Then, we demonstrate for commonly used object detection test datasets, such as BDD100k~\cite{yu2020bdd100k}, MS-COCO~\cite{lin2014microsoft}, Pascal VOC~\cite{Everingham10} and Kitti~\cite{Geiger2012CVPR}, and also for a proprietary dataset on car part detection 
that our method detects label errors by reviewing moderate sample sizes of 200 images per dataset.
Our contributions can be summarized as follows:
a) we introduce a novel method based on the instance-wise loss for detecting label errors in object detection, b) we introduce a benchmark for identifying four types of label errors on BDD100k and EMNIST-Det, and c) we apply our method to detect label errors in commonly used and proprietary object detection datasets and manually evaluate the error detection performance for moderate sample sizes.

\section{Related Work}
\label{sec: related work}

\noindent
The influence of noisy labels in the training as well as in the test data is an active and current research topic.
The labels for commonly used image classification datasets are noisy~\cite{northcutt2021pervasive} and this also applies to object detection. 
\Cref{fig: boat} shows an image from the Pascal VOC 2007 test dataset containing just two labeled boats, but clearly more can be seen.

For the task of image classification, some learning methods exist that are more robust to label noise~\cite{goldberger2016training,han2018co,hendrycks2018using,jiang2018mentornet,northcutt2021confident,reed2014training,wang2019symmetric,xu2019l_dmi,zhang2017mixup}. 
Also the task of label error detection has been tackled in \cite{chen2019understanding, northcutt2021pervasive} and theoretically underpinned in \cite{northcutt2021confident}.
Chen \etal~\cite{chen2019understanding} filter whole samples with noisy labels but individual label errors are not detected.
Northcutt \etal\ present label errors in image classification datasets and study to which extent they affect benchmark results \cite{northcutt2021pervasive} followed by the introduction of the task of label error detection \cite{northcutt2021confident}.
The latter introduces a confident learning approach, assuming that the label errors are image-independent. 
Then, the joint distribution between the noisy and the true labels with class-agnostic label uncertainties is estimated and utilized to find label errors.
This method allows to find label errors on commonly used image classification (\ie MNIST or ImageNet) and sentiment classification datasets (Amazon Reviews), resulting in improved model performance by re-training on cleaned training data.
This line of works has been recently extended to the task of multi-label classification in \cite{2022multilabel_labelerrors}, where a single object is shown per image but may carry multiple labels.

For object detection, Wu \etal~\cite{wu2018soft} as well as Xu \etal~\cite{xu2019missing} study how noisy training labels affect the model performance, observing that the model is reasonably robust when dropping labels.
To counter label errors in object detection, methods that model label uncertainty~\cite{riedlinger2022uncertainty,miller2018dropout} or more robust object detectors have been developed~\cite{li2020learning,feng2021labels,zhang2019towards,xu2019missing,kang2022finding,buttner2023impact}.
Buttner \etal~\cite{buttner2023impact} simulate label errors and introduce a co-teaching approach for more robust training with noisy training data.
For the task of label error detection, Koksal \etal~\cite{koksal2020effect} simulate different types of label errors in video sequences.
Predictions and labels of consecutive frames are compared and then manually reviewed to eliminate erroneous annotations.
Hu \etal~\cite{hu2022probability} introduce a probability differential method (PD) to identify and exclude annotations with wrong class labels during training.

For semantic segmentation, a benchmark is introduced by Rottmann and Reese~\cite{rottmann2022automated} to detect missing labels. 
For this purpose, uncertainty estimates are used to predict for each false positive connected component whether a label error is present or not. 
Detection is performed by considering the discrepancy of the given (noisy) label and the corresponding uncertainty estimate.

Our work introduces the first benchmark with four types of label errors for label error detection methods on object detection datasets as well as a label error detection method (that detects all four types of label errors) and a number of baselines.
The label error detection methods simulate a) different types of label errors and detect these with the help of a tracking algorithm~\cite{koksal2020effect} for images derived from video sequences or b) class-flips and identify these via a probability differential (PD)~\cite{hu2022probability}, where, however, the focus is on training.
For our benchmark, we randomly simulate four types of label errors and detect them simultaneously with a new and four baseline methods, including PD.
In our method, the discrepancy between the prediction or expectation of the network and the actual labels is used to find label errors. 
This discrepancy is determined by the classification and regression loss from the first and second stage of the detector.
This allows to find not only simulated but also real label errors on commonly used object detection test datasets.

\section{Label Error Detection}
\label{sec: label error detection}
\noindent
In this section we describe our label error benchmark as well as the setup and evaluation for real label errors on commonly used object detection test datasets and a proprietary dataset.
We describe which datasets are used, which types of label errors are considered and the way we simulate label errors inspired by observations that we made in commonly used datasets and by related work.
We then introduce our detection method as well as four additional baseline methods. 
This is complemented with evaluation metrics used to compare the methods with each other on our label error benchmark and the evaluation procedure for commonly used test datasets where we manually review the findings of our method for moderate sample sizes.

\begin{figure*}
    \centering
    \begin{tikzpicture}
        \node at (0,0) {\includegraphics[width=0.825\textwidth]{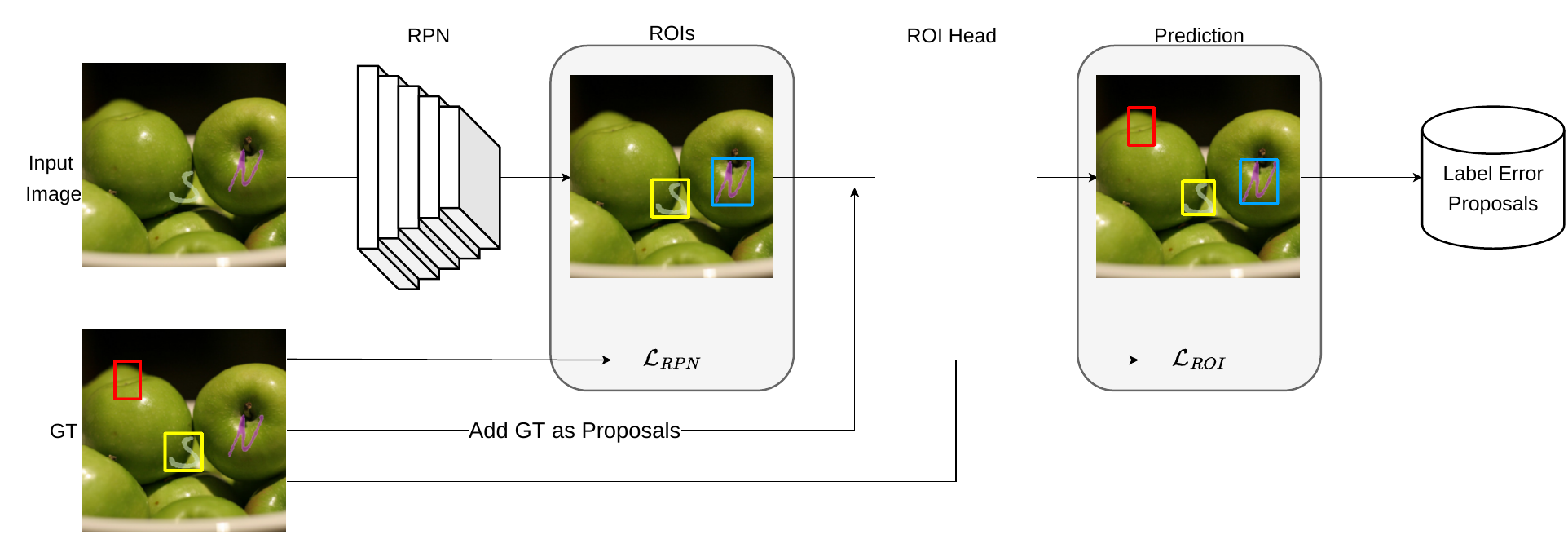}};
        \node at (1.8,0.9) {\includegraphics[width=.09\textwidth]{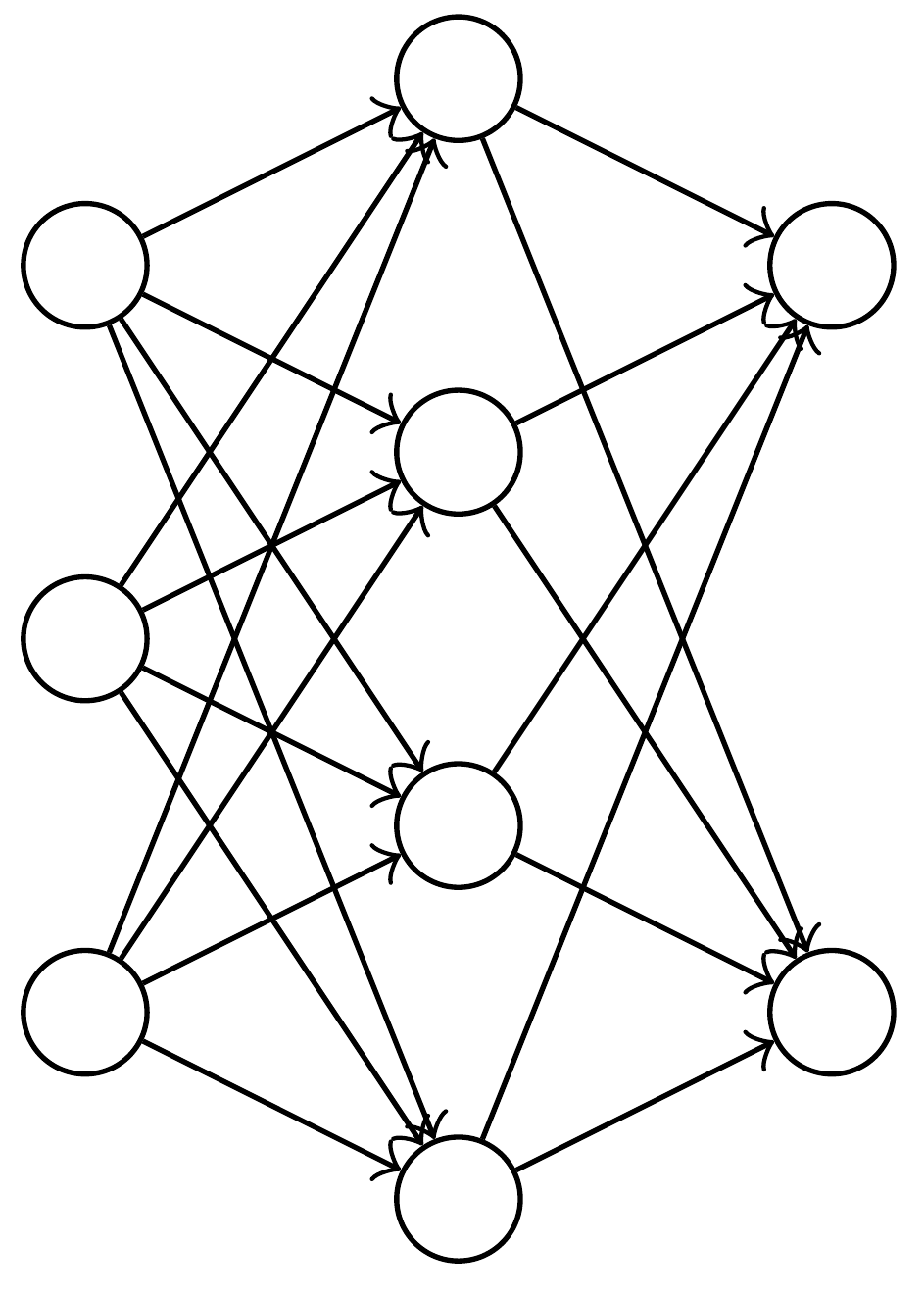}};
    \end{tikzpicture}
    \caption{Visualization of our instance-wise loss method for detecting label errors. The red label indicates a \emph{spawn}, the blue one a \emph{drop} and the yellow one a correct label.}
    \label{fig: loss method}
\end{figure*}

\subsection{Label Error Benchmark}
\label{sec: label error benchmark}
\noindent
In the following, we distinguish between the label error benchmark and the detection of real label errors, where in the former, label errors are simulated (and therefore known) and the performance of the five different methods are evaluated automatically. 
In the latter case, label errors are not simulated but real and therefore automated evaluation is impossible as the real label errors are unknown.
To enable a reliable evaluation, only datasets containing almost no real label errors are used for the benchmark. 
We observe that commonly used datasets in object detection, such as MS-COCO, Pascal VOC or Kitti, contain significant amounts of label errors, thus they are not suitable for the benchmark.
Nevertheless, to demonstrate the performance of our instance-wise loss method on these datasets, a moderate sample size of 200 label error proposals are manually reviewed and counted for each dataset.

\subsubsection{Datasets}
For our benchmark we use the semi-synthetic EMNIST-Det dataset and BDD100k, referred to as BDD. 
EMNIST-Det consists of $20,\!000$ training and $2,\!000$ test images.
To have the best possible labels for BDD, we filter the training and validation split, such that we only use daytime images with clear weather conditions.
This results in $12,\!454$ training images and the validation split is split into equally-sized test and validation sets, each consisting of $882$ images.

\subsubsection{Simulated Label Errors}
We consider four different types of label errors: missing labels (\emph{drops}), correct localization but wrong classification (\emph{flips}), correct classification but inaccurate localization (\emph{shifts}), and labels that actually represent background (\emph{spawns}). 
Any dataset is equipped with a set of $G$ labels, \ie $\mathcal{Y}=\{ b^{(i)}:i=1,...,G \}$ where each label is a tuple $b^{(i)} = (x^{(i)}, y^{(i)}, w^{(i)}, h^{(i)}, c^{(i)} )$ containing the box center $(x^{(i)},y^{(i)})$, the box extent $(w^{(i)},h^{(i)})$ and a class index $c^{(i)}$ from the set of classes $\{1,...,C\}$.
Let $\mathcal{I} = \{ 1,\ldots,G \}$ be the set of indices of all boxes $b^{(i)}\in\mathcal{Y},\, i=1,\ldots,G$.
We now describe all types of label errors applied to $\mathcal{Y}$ and we make the assumption that a single label is only perturbed by one type of label error instead of multiple types.
We choose a parameter $\gamma \in [0,1]$ representing the relative frequency of label errors.

\paragraph{Drops}
For dropping labels, we randomly choose a subset $\mathcal{I}_d$ of $\mathcal{I}$ with cardinality $|\mathcal{I}_d| =  \lfloor \frac{\gamma}{4}\cdot G \rfloor$. We drop all labels $\mathcal{Y}_d = \{ b^{(i)} \, : \, i \in \mathcal{I}_d \}$ and denote $\mathcal{I}_{\setminus d} = \mathcal{I} \setminus \mathcal{I}_d$. Analogously, $\mathcal{Y}_{\setminus d} = \mathcal{Y} \setminus \mathcal{Y}_{d}$.

\paragraph{Flips}
For flipping class labels, we randomly choose a subset $\mathcal{I}_f$ of $\mathcal{I}_{\setminus d}$ with cardinality $|\mathcal{I}_f| =  \lfloor \frac{\gamma}{4}\cdot G \rfloor$ and copy $\tilde{\mathcal{Y}}_f = \mathcal{Y}_f= \{ b^{(i)} \, : \, i \in \mathcal{I}_f \}$. 
Then, we randomly flip the class of every label in $\tilde{\mathcal{Y}_f}$ to a different label.
We denote $\mathcal{I}_{\setminus f} = \mathcal{I}_{\setminus d} \setminus \mathcal{I}_f$ and $\mathcal{Y}_{\setminus f} = (\mathcal{Y}_{\setminus d} \setminus \mathcal{Y}_f) \cup \tilde{\mathcal{Y}}_f$.

\paragraph{Shifts}
To insert \emph{shifts}, we change the localization of labels.
We randomly choose a subset $\mathcal{I}_{sh}$ of $\mathcal{I}_{\setminus f}$ with cardinality $|\mathcal{I}_{sh}| =  \lfloor \frac{\gamma}{4}\cdot G \rfloor$ and copy $\tilde{\mathcal{Y}}_{sh} = \mathcal{Y}_{sh}= \{ b^{(i)} \, : \, i \in \mathcal{I}_{sh} \}$. 
For the shift of a box $\tilde{b}^{(i)}\in\tilde{\mathcal{Y}}_{sh}$, the new values $\tilde{y},\ \tilde{h}$ are determined analogously to $\tilde{x}=\mathcal{N}(x, 0.15\cdot w)$ and $\tilde{w}=\mathcal{N}(w, 0.15\cdot w)$ drawn from a normal distribution with itself as the expected value and $0.15\cdot w$ as the standard deviation. 
To avoid the \emph{shift} being too small or too large, the parameters are repeatedly chosen until the intersection over union ($\iou$) of the original label $b^{(i)}\in \mathcal{Y}_{sh}$ and $\tilde{b}^{(i)}\in\tilde{\mathcal{Y}}_{sh}$ is in the interval of $[0.4, 0.7]$, $\forall i=1,...,\lfloor \frac{\gamma}{4}\cdot G\rfloor$.
We denote $\mathcal{I}_{\setminus {sh}} = \mathcal{I}_{\setminus f} \setminus \mathcal{I}_{sh}$ and $\mathcal{Y}_{\setminus sh} = (\mathcal{Y}_{\setminus f} \setminus \mathcal{Y}_{sh}) \cup \tilde{\mathcal{Y}}_{sh}$.

\paragraph{Spawns}
For spawning labels, we randomly choose a subset $\mathcal{I}_{sp}$ of $\mathcal{I}_{\setminus sh}$ with cardinality $|\mathcal{I}_{sp}| =  \lfloor \frac{\gamma}{4}\cdot G \rfloor$ and copy $\tilde{\mathcal{Y}}_{sp} = \mathcal{Y}_{sp}= \{ b^{(i)} \, : \, i \in \mathcal{I}_{sp} \}$. 
Then, we assign every label $\tilde{b}^{(i)}\in\tilde{\mathcal{Y}}_{sp}$ randomly to another image.
Since in our experiments all images in a dataset have the same resolution, this ensures that objects do not appear in unusual positions or outside of an image. 
For instance, a car in BDD is more likely to be found on the bottom part of the image rather than in the sky.
We denote the set of noisy labels as $\tilde{\mathcal{Y}} = \mathcal{Y}_{\setminus sh} \cup \tilde{\mathcal{Y}}_{sp}$.

One example per label error type is shown in \cref{fig: label error types}.
Note that the number of labels $G$ is unchanged as the number of \emph{drops} and \emph{spawns} is the same.

\subsection{Baseline Methods}
\noindent
The four baselines that we compare our instance-wise loss method with are based on a) inspecting the labels without the use of deep learning, b) the box-wise detection score c) the classification entropy of the two-stage object detectors and d) the probability differential from \cite{hu2022probability}.

\paragraph{Naive Baseline}
We introduce a naive baseline to show the significant improvement of deep learning in label error detection for object detection over manual label review. 
We assume that all label errors can be smoothly found by taking a single look at all existing noisy labels and the (actually unknown) \emph{drops}, \ie by performing $\lfloor (1+\frac{\gamma}{4} )\cdot G\rfloor$ operations. 
This simplified assumption is of course unrealistic, however the corresponding results can serve as a lower bound for the effort of manual label review.

\paragraph{Detection Score Baseline}
The detection score baseline works as follows:
For a given image from the set of all images of the dataset $z\in Z$, a neural network predicts a fixed number $N_0$ of bounding boxes for the first stage $\hat{\mathcal{B}}_{0,z}=\{(\hat{x}^{(i)},\hat{y}^{(i)},\hat{w}^{(i)},\hat{h}^{(i)},\hat{s}^{(i)}_0): i=1,\ldots,N_0$\}, where $\hat{x}^{(i)},\hat{y}^{(i)},\hat{w}^{(i)},\hat{h}^{(i)}$ represent the localization and $\hat{s}^{(i)}_0\in[0,1]$ the objectness score. 
Then, we add the boxes of the labels as proposals for the second stage to ensure that at least one prediction exists for each label, which is particularly important for the detection of \emph{spawns}.
For this purpose, each ground truth label from $\mathcal{Y}$ is assigned with a detection score of $\hat{s}_0=1$.
After adding the labels to $\hat{\mathcal{B}}_{0,z}$, only those $N_1$ boxes that remain after class-independent non-maximum suppression (NMS) and score thresholding on $\hat{s}_0$ with $s_{\epsilon}\geq 0$, get into the second stage $\hat{\mathcal{B}}_{1,z} = \{(\hat{x}^{(i)},\hat{y}^{(i)},\hat{w}^{(i)},\hat{h}^{(i)},\hat{s}_0^{(i)}):i=1,\ldots,N_1 \}$.
After box refinement and classification as well as NMS on the detection score $\hat{s}_2^{(i)}$, $N_2$ label error proposals remain.
Here, $\hat{s}_2^{(i)}$ is the detection score of the detection head and unlike $\hat{s}_0^{(i)}$, $\hat{s}_2^{(i)}$ represents not only the presence of an object, but also takes the class probabilities of the predicted object into account.
The remaining $N_2$ label error proposals $\hat{\mathcal{B}}_{2,z} = \{(\hat{x}^{(i)},\hat{y}^{(i)},\hat{w}^{(i)},\hat{h}^{(i)},\hat{s}_2^{(i)},\hat{p}_1^{(i)},\ldots,\hat{p}_C^{(i)}) :i=1,\ldots,N_2 \}$, are defined by the localization $(\hat{x}^{(i)},\hat{y}^{(i)},\hat{w}^{(i)},\hat{h}^{(i)})$, the detection score $\hat{s}_2^{(i)}$ and the class probabilities $\hat{p}_1^{(i)},\ldots,\hat{p}_C^{(i)}$.
The predicted class is given by $\hat{c}^{(i)}=\argmax_{k=1,\ldots,C} \hat{p}^{(i)}_k$.
Score thresholding is omitted here, or the score $\tau$ used for this is equal to $0$, since $\tau>10^{-4}$ would suppress most of the label error proposals that detect \emph{spawns}. 
The detection score of these proposals is mostly very close to zero unless a second true label is nearby.
After inferring each image $z\in Z$ as described above, we get label error proposals for the whole dataset by $\bigcup\limits_{z\in Z}\hat{\mathcal{B}}_{2,z}$.

\paragraph{Entropy Baseline}
The entropy baseline follows the same procedure, only the NMS in the first and second stage are based on the respective box-wise entropy rather than the detection score.

\paragraph{Probability Differential Baseline}
For the PD baseline from Hu \etal~\cite{hu2022probability}, we do not add the boxes of the labels as proposals.
Furthermore, score thresholding and NMS is not applied, such that every box $\hat{b}\in\hat{\mathcal{B}}_{0,z}$ also remains in $\hat{\mathcal{B}}_{2,z}$.
After assigning every label with sufficiently overlapping predictions, the probability differential (PD) for every label $b\in\mathcal{Y}$ with class $c$ and the $m$ assigned predictions $\hat{b}^{(i)}$ ($i=1,\ldots,m$) is defined as:
$PD(b) = \frac{\sum\limits_{i=1}^m \big(1 + \max\limits_{k\in\{1,\ldots,C\}\setminus\{c\}} \hat{p}^{(i)}_k  - \hat{p}^{(i)}_c \big)}{2m}$.
The PD of a label is in $[ 0,1 ]$ and intuitively, the more the probabilities of the predictions and the class of the label differ (higher PD) the more likely a label error is present.
Note that \emph{drops} are always overlooked.

\subsection{Instance-wise Loss Method}
\noindent
Our method to detect the introduced label error types (\cref{sec: label error benchmark}) is based on an instance-wise loss for two-stage object detectors. 
The NMS is no longer based on the detection score or the entropy, but on the box-wise loss of the respective stage.
Every prediction $\hat{b}_0\in \hat{\mathcal{B}}_{0,z}$ is assigned with a region proposal loss ($\mathcal{L}_{\mathit{RPN}}$), which is the sum of a classification (binary cross-entropy) and regression (smooth-L1) loss for the labels and the prediction itself. 
The computation of the loss is identical to the one in training.
Since not all labels are associated with a proposal after the first stage, \ie the model may predict only background near a label, we add the labels themselves to the set of label error proposals. 
After box refinement and classification, every box $\hat{b}_1\in \hat{\mathcal{B}}_{1,z}$ is assigned with a region of interest loss ($\mathcal{L}_{\mathit{ROI}}$), which is the sum of a classification (cross entropy) and regression (smooth-L1) loss for the labels and the prediction itself. 
Then $\mathcal{L}_{\mathit{RPN}}$ and $\mathcal{L}_{\mathit{ROI}}$ are summed up to obtain an instance-wise loss score.
A sketch of our method is shown in \cref{fig: loss method}. 
We can find the \emph{dropped} blue label for ``N'' since the predictions near the object should have a high detection score, resulting in a high first stage classification loss. 
The \emph{spawned} red label is assigned with a high classification loss from the first and second stage, since the assigned predictions should have a score close to zero in the first stage and an almost uniform class distribution in the second stage. 
Whether the yellow label is a \emph{flip} is irrelevant for the first stage, since the loss should be small either way. 
If the box is classified correctly according to the associated label, there is a large classification loss for a \emph{flip} and a small one otherwise.
The \emph{shifts} are addressed by the first and second stage regression loss.

The intuition behind our method is that a sufficiently well-specified and fitted model has small expected loss on data sampled during training.
Sufficient data sampling and moderate label error rates lead to label errors giving rise to outlier losses which are identified as proposals.
We show that our method separates correct from incorrect labels for a classification model $\widehat{p}$ trained with the cross entropy loss \(\CE\).
\begin{prop}[Statistical Separation of the Cross Entropy Loss]
    Let training and testing labels be given under a stochastic flip in \(p(\cdot | x)\) with probability \(p_\mathrm{F}\).
    A correct label \(y = f(x)\) is given by a true labeling function \(f\) and has probability \(p(f(x)|x) = 1 - \pf\).
    Incorrect labels \(\widetilde{y} \neq f(x)\) are drawn with probability \(p(\widetilde{y} | x) = \pf / (C - 1)\).
    Let the label distribution \(p(\cdot|x)\) be PAC-learnable by the hypothesis space of \(\widehat{p}(\cdot | x)\) w.r.t.\ \(\DKL\) (to precision \(\varepsilon\) and confidence \(1 - \delta\)) and let \(\kappa > 0\).
    If \(\pf < \tfrac{C - 1}{C} (1 - 2\kappa)\), we obtain strict separation of the loss function
    \begin{align}
        \begin{split}
            \CE(\widehat{p}(x) \| f(x)) &< - \log(1 - \pf - \kappa) \\ 
            &< - \log(\kappa + \tfrac{\pf}{C - 1}) < \CE(\widehat{p}(x) \| \widetilde{y})
        \end{split}
    \end{align}
    for any incorrect label \(\widetilde{y} \neq f(x)\) with probability \(1 - \delta\) over chosen training data and with probability \(1 - \tfrac{2 \varepsilon}{\kappa^2}\) over the choice of \(x\).
\end{prop}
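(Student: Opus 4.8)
\emph{Proof strategy.} The plan is to reduce everything to a single pointwise estimate $|\widehat{p}_k(x) - p(k \mid x)| < \kappa$ valid for all classes $k$, which follows from the PAC-learnability hypothesis by a Markov-plus-Pinsker argument, and then to read off the two one-sided bounds on the cross entropy loss $\CE(\widehat{p}(x)\|y) = -\log\widehat{p}_y(x)$ by substituting the known conditional probabilities $p(f(x)\mid x) = 1 - \pf$ and $p(\widetilde{y}\mid x) = \pf/(C-1)$. The middle inequality in the claim will then be seen to be equivalent, via monotonicity of $\log$, to exactly the stated threshold on $\pf$.

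First I would unpack the learnability assumption: with probability $1 - \delta$ over the training sample, the learned model satisfies $\E_x[\DKL(p(\cdot\mid x)\,\|\,\widehat{p}(\cdot\mid x))] < \varepsilon$. Working on this event, Markov's inequality applied to the nonnegative random variable $\DKL(p(\cdot\mid x)\,\|\,\widehat{p}(\cdot\mid x))$ shows that with probability at least $1 - 2\varepsilon/\kappa^2$ over the draw of $x$ one has $\DKL(p(\cdot\mid x)\,\|\,\widehat{p}(\cdot\mid x)) < \kappa^2/2$. Pinsker's inequality upgrades this to a total-variation bound $\|p(\cdot\mid x) - \widehat{p}(\cdot\mid x)\|_{\mathrm{TV}} < \kappa/2$, and since total variation dominates every coordinatewise difference, $|\widehat{p}_k(x) - p(k\mid x)| < \kappa$ holds for all $k$ simultaneously; this is the only probabilistic input to the rest of the argument.

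The remainder is arithmetic. For the correct label, $\widehat{p}_{f(x)}(x) > p(f(x)\mid x) - \kappa = 1 - \pf - \kappa$, which is strictly positive because $\pf < \tfrac{C-1}{C}(1 - 2\kappa) < 1 - \kappa$, hence $\CE(\widehat{p}(x)\|f(x)) = -\log\widehat{p}_{f(x)}(x) < -\log(1 - \pf - \kappa)$. For any incorrect label $\widetilde{y}$, $\widehat{p}_{\widetilde{y}}(x) < p(\widetilde{y}\mid x) + \kappa = \kappa + \tfrac{\pf}{C-1}$, so $\CE(\widehat{p}(x)\|\widetilde{y}) > -\log\!\big(\kappa + \tfrac{\pf}{C-1}\big)$. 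Finally, by strict monotonicity of $\log$, the bound $-\log(1 - \pf - \kappa) < -\log\!\big(\kappa + \tfrac{\pf}{C-1}\big)$ is equivalent to $1 - \pf - \kappa > \kappa + \tfrac{\pf}{C-1}$, i.e.\ $\pf\,\tfrac{C}{C-1} < 1 - 2\kappa$, i.e.\ $\pf < \tfrac{C-1}{C}(1 - 2\kappa)$, precisely the assumption. Chaining the three inequalities gives the claim, with failure probability $\delta$ over the training data and $2\varepsilon/\kappa^2$ over the choice of $x$.

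I do not expect a serious obstacle, as the argument is essentially mechanical; the points demanding care are (i) calibrating the constants in the Markov/Pinsker chain so that the advertised probability $1 - 2\varepsilon/\kappa^2$ comes out (one must control $\DKL$ at level $\kappa^2/2$, Pinsker being what turns the KL bound into the coordinatewise bound), (ii) making sure the coordinatewise estimate holds for all classes on the \emph{same} event of $x$, so that it controls the $f(x)$ term and every $\widetilde{y}$ term at once, and (iii) checking the positivity $1 - \pf - \kappa > 0$ required for the logarithm and for the strictness of the first bound, which again follows directly from $\pf < \tfrac{C-1}{C}(1 - 2\kappa)$. The only genuine content is the observation that the two candidate loss values $-\log(1 - \pf - \kappa)$ and $-\log\!\big(\kappa + \tfrac{\pf}{C-1}\big)$ straddle a common threshold exactly when $\pf$ lies below the Bayes-type bound $\tfrac{C-1}{C}(1 - 2\kappa)$.
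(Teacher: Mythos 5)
Your proposal is correct and follows essentially the same route as the paper's proof: Markov's inequality on the conditional KL divergence, Pinsker's inequality to obtain a total-variation (hence coordinatewise) bound $|\widehat{p}_k(x)-p(k\mid x)|<\kappa$ on a single event in $x$, and then monotonicity of the logarithm to turn the two confidence bounds into the separated loss thresholds, with the middle inequality equivalent to $\pf<\tfrac{C-1}{C}(1-2\kappa)$. The only cosmetic difference is your choice of the sup-convention for total variation (giving $\kappa/2$ before relaxing to $\kappa$) versus the paper's $\ell_1$-convention with Pinsker constant $\sqrt{2}$; the conclusions are identical.
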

We include a proof of this statement in the supplementary material.
The PAC-learnability assumption~\cite{shalev2014understanding} yields rigorous bounds for the deviation of the model \(\widehat{p}\) from the label distribution \(p\) which contains label flips.
Conditioned to the events of drawing correct versus drawing incorrect labels, these bounds carry over to the cross entropy.
These bounds separate the two events with certain probability given in the statement above.

\begin{figure*}
    \centering
    \begin{subfigure}[c]{0.495\textwidth}
    \resizebox{\linewidth}{!}{
    \includegraphics[trim={0.35cm 0 0.6cm 0},clip,height=3.1cm]{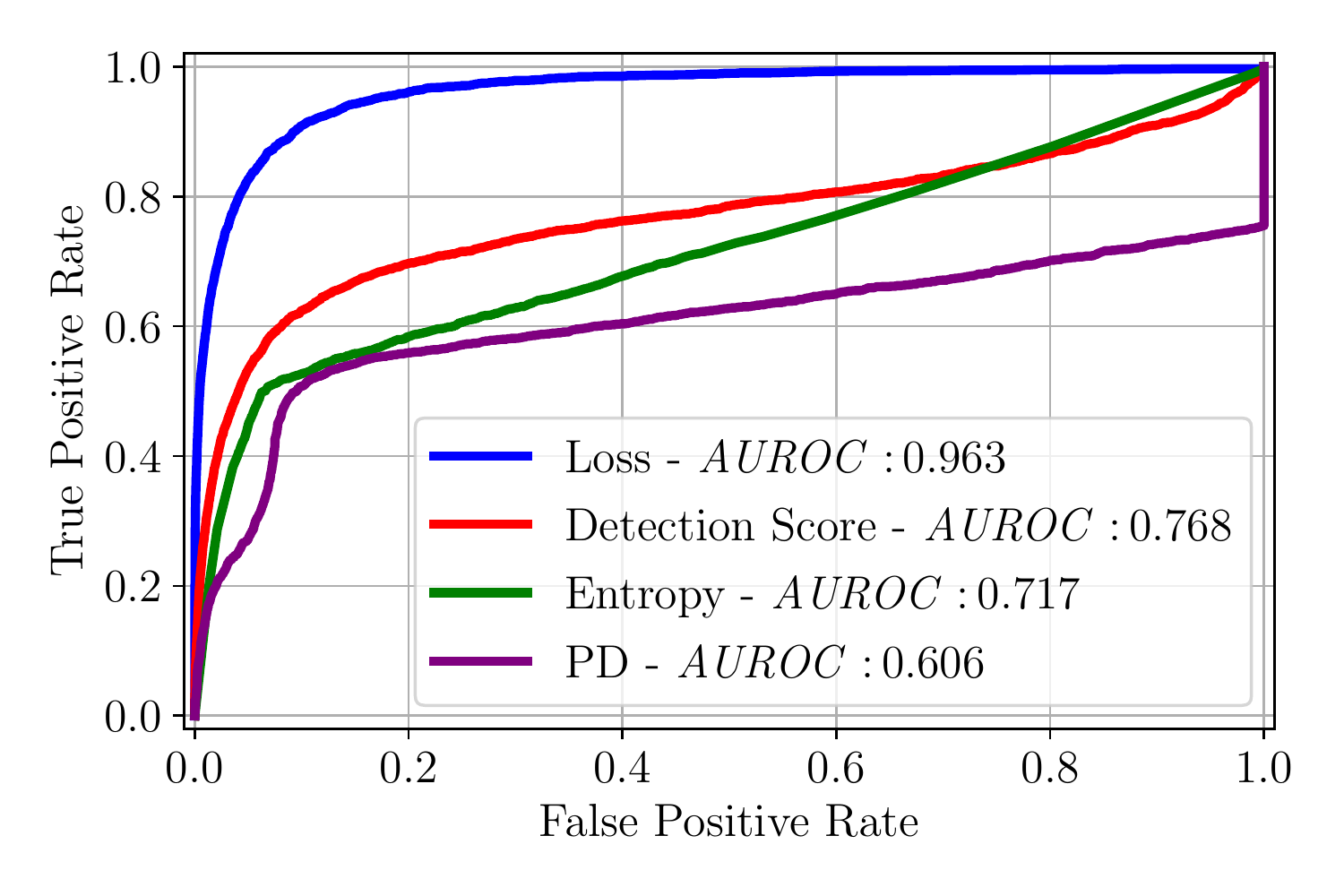}
    \includegraphics[trim={0.6cm 0 0.62cm 0},clip,height=3.1cm]{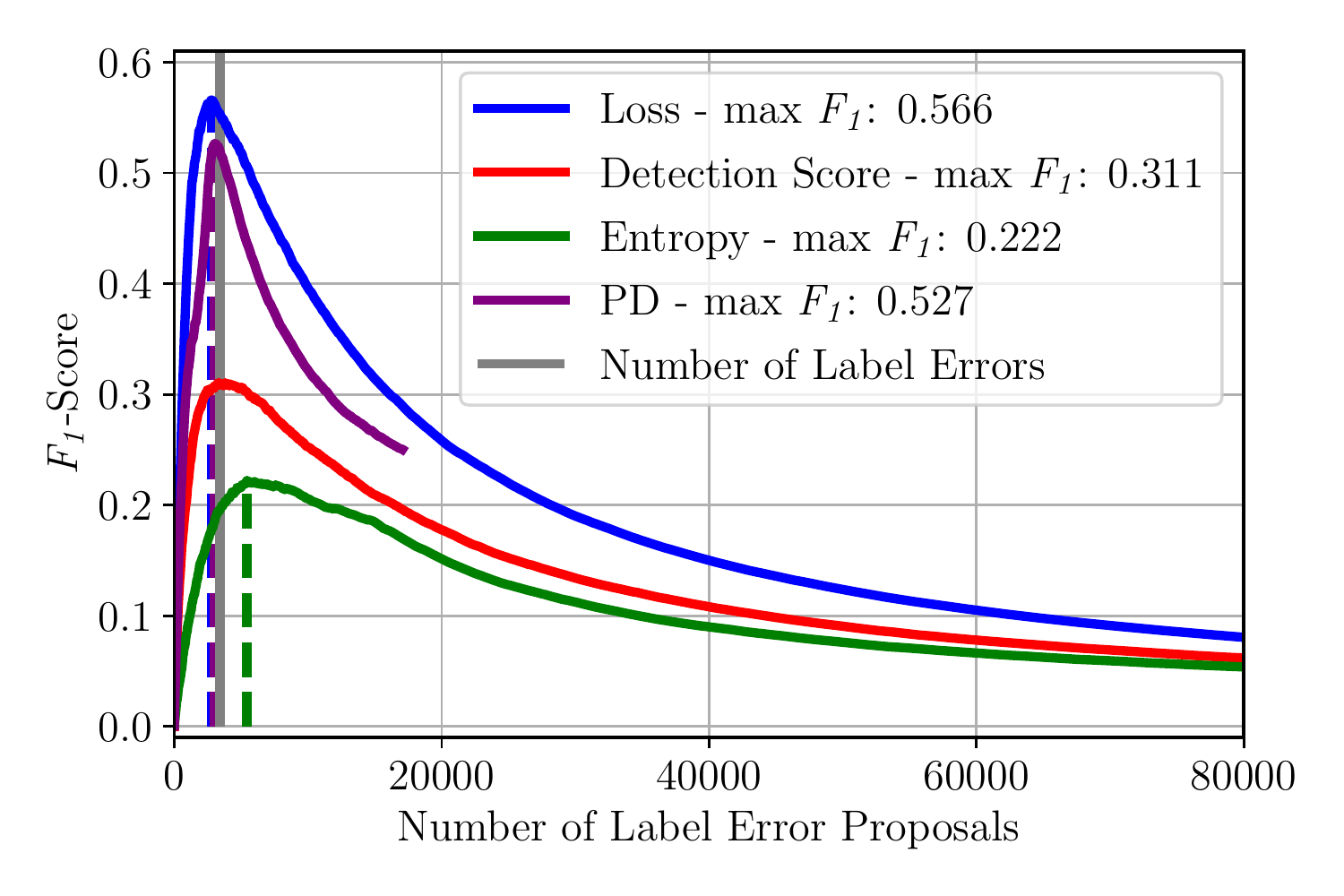}
    }
    \subcaption{Original training data}
    \end{subfigure}
    \begin{subfigure}[c]{0.495\textwidth}
    \resizebox{\linewidth}{!}{
    \includegraphics[trim={0.35cm 0 0.6cm 0},clip,height=3.1cm]{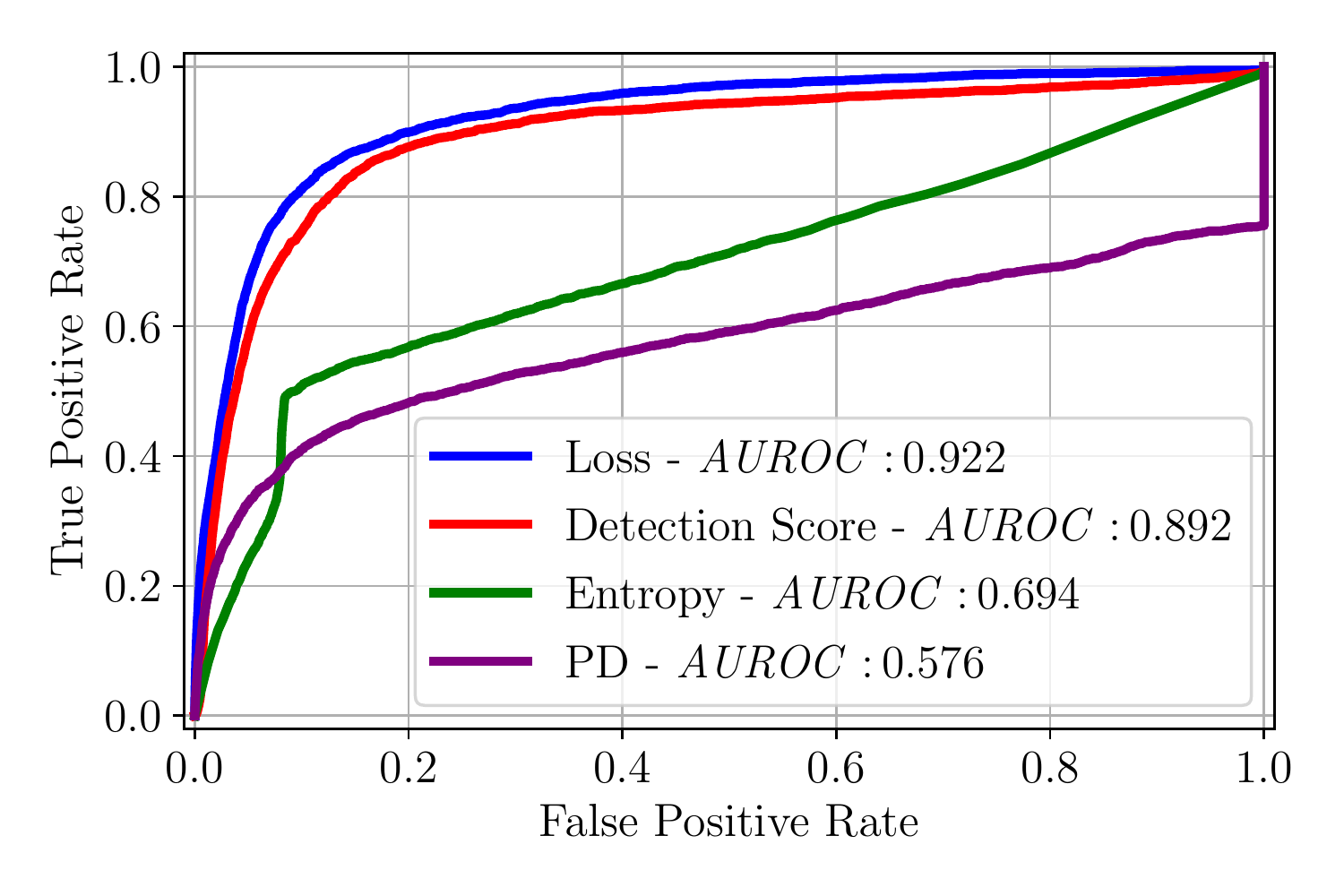}
    \includegraphics[trim={0.6cm 0 0.62cm 0},clip,height=3.1cm]{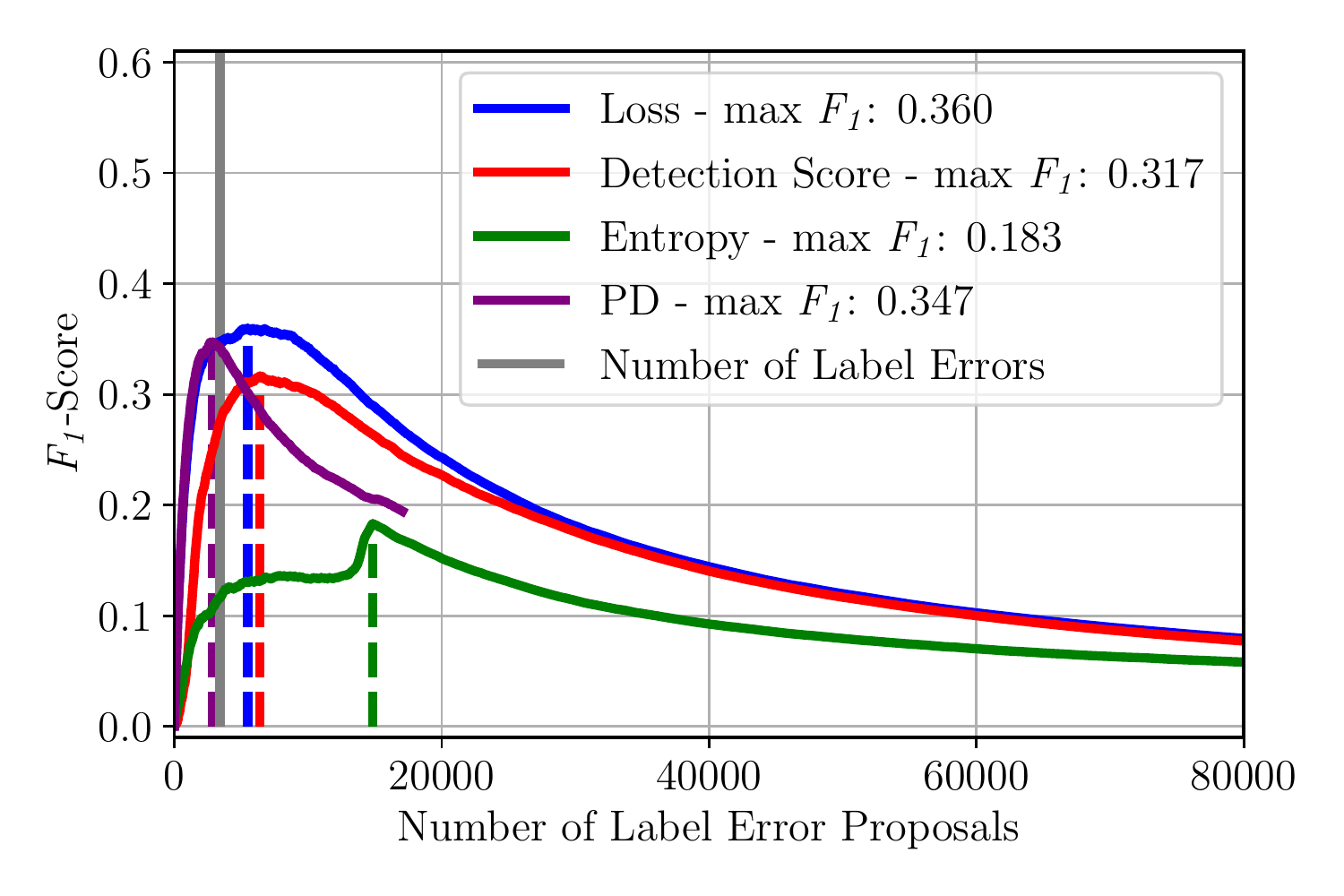}
    }
    \subcaption{Noisy training data with $\gamma=0.2$}
    \end{subfigure}
    \caption{The two left plots in (a) show evaluations based on the predictions of a model trained on original training data and the two right ones in (b) based on noisy training data with $\gamma=0.2$. The number of considered label error proposals depends on threshold $\tau$.}
    \label{fig: auroc-f1}
\end{figure*}

\subsection{Evaluation Metrics}
\noindent
Ignoring that natural label errors exist in EMNIST-Det and BDD, we benchmark the five methods introduced in above by means of our label error simulation.
To this end, we take the label error proposals of the respective method and the set of original labels $\mathcal{Y}$ and decide for every proposal whether it is a label error, which corresponds to a true positive ($\tpl$), or no label error, which corresponds to a false positive ($\fpl$).
Label errors that are not detected are called false negatives ($\fnl$).
A proposal of a label error detector is a $\tpl$ if the $\iou$ between the proposal under consideration and a noisy label on the image is greater or equal to a threshold $1\geq\alpha>0$.
Here, the noisy label categorizes what type of label error is detected by the proposal.
If the $\iou$ is less than $\alpha$, the proposal is a $\fpl$.
After determining this for each proposal from the dataset, the area under the receiver operator characteristic curve ($\auroc$, see~\cite{DBLP:conf/icml/DavisG06}) and $\fone$ values, which is the harmonic mean of precision and recall (see~\cite{dice1945measures}), is calculated according to the decision between $\tpl$ and $\fpl$.
$\fone$ values are determined with thresholding on the score of the respective method (loss/detection score/entropy/PD).
We always choose the optimal threshold, \ie the threshold at which the $\fone$ value is maximized ($\max\,\fone$).
Note, since the naive baseline considers images and thus label error proposals in random order, the associated $\auroc$ values are always $0.5$.

    

\subsection{Detection of Real Label Errors}
\label{sec: detection of real label errors}
\noindent
For commonly used datasets we proceed as follows. 
We consider for each dataset $200$ proposals of our method with highest loss and manually flag them as $\tpl$ or $\fpl$, based on the label policy corresponding to the given dataset. 
Note that we can still compute precision values but we are not able to determine $\auroc$ or $\max\,\fone$ values as the number of total label errors is unknown.
Since several label errors can be detected with one proposal, precision describes the ratio of proposals with at least one label error and the total number of proposals considered, i.e.\ $200$.

\section{Numerical Results}
\label{sec: results}
\noindent
In this section we study label error detection performance on our label error benchmark as well as for real label errors in BDD, VOC, MS-COCO (COCO), Kitti and the 
\begin{table}
    \centering
    \resizebox{.675\columnwidth}{!}{
    \begin{tabular}{cccc} 
     Dataset & Backbone & $\map_{50}$ & $\map_{50}^{(*)}$ \\
     \toprule
     EMNIST-Det & Swin-T & $98.2$ & $98.0$ \\
     EMNIST-Det & ResNeSt101 & $96.4$ & $95.2$ \\
     BDD & Swin-T & $52.1$ & $50.3$ \\
     BDD & ResNeSt101 & $56.8$ & $52.9$ \\
     \midrule
     COCO & Swin-T & $54.1$ & \\
     Kitti & Swin-T & $38.6$ & \\
     VOC & Swin-T & $83.3$ & \\
     CE & Swin-T & $70.0$ & \\
     \bottomrule
    \end{tabular}
    }
    \caption{Validation of object detection performance on our datasets. $^{(*)}$ indicates learning with simulated label errors ($\gamma=0.2$).}
    \label{tab: map-table}
\end{table}
proprietary dataset (CE).
The benchmark results are presented in terms of $\auroc$ and $\max\,\fone$ values for the joint evaluation of all label error types, \ie when all label error types are present simultaneously, in \cref{sec: benchmark results}.
For the latter, we show how many real label errors we can detect among the top-$200$ proposals for each real-world dataset in \cref{sec: real label error}.

\begin{table*}
    \centering
    
    \resizebox{.8\textwidth}{!}{
    \begin{tabular}{ccc | cccc | cccc}\toprule
     \multicolumn{3}{c|}{} & \multicolumn{4}{c|}{$\auroc$} & \multicolumn{4}{c}{$\max\,\fone$} \\
     Dataset & Backbone & Train Labels & Loss & Detection Score & Entropy & PD & Loss & Detection Score & Entropy & PD \\
     \toprule
     EMNIST-Det & Swin-T & Original & \textbf{99.46} & \underline{73.24} & 71.49 & 59.67 & \textbf{95.54} & \underline{64.74} & 49.58 & 62.32 \\
     EMNIST-Det & Swin-T & Noisy & \textbf{99.40} & \underline{82.44} & 77.32 & 62.26 & \textbf{93.43} & \underline{62.37} & 45.25 & 62.24 \\
     EMNIST-Det & ResNeSt101 & Original & \textbf{99.84} & \underline{88.45} & 86.70 & 60.59 & \textbf{94.31} & \underline{62.56} & 38.81 & 60.82 \\
     EMNIST-Det & ResNeSt101 & Noisy & \textbf{99.87} & \underline{93.11} & 86.40 & 61.82 & \textbf{90.74} & \underline{59.50} & 34.53 & 59.01 \\
     \midrule
     BDD & Swin-T & Original & \textbf{96.30} & \underline{76.82} & 71.73 & 60.59 & \textbf{56.59} & 31.14 & 22.21 & \underline{52.66} \\
     BDD & Swin-T & Noisy & \textbf{92.16} & \underline{89.21} & 69.42 & 57.58 & \textbf{35.97} & 31.68 & 18.33 & \underline{34.72} \\
     BDD & ResNeSt101 & Original & \textbf{95.79} & \underline{87.47} & 83.58 & 60.31 & \textbf{54.62} & 31.99 & 20.37 & \underline{47.16} \\
     BDD & ResNeSt101 & Noisy & \textbf{92.97} & \underline{90.76} & 78.18 & 56.79 & \textbf{27.85} & 25.65 & 18.10 & \underline{27.74} \\
     \bottomrule
    \end{tabular}
    }
    \caption{Label error detection experiments with two different backbones; higher values are better. Bold numbers indicate the highest $\auroc$ or $\max\,\fone$ per experiment and underlined numbers are the second highest.}
    \label{tab: auroc-f1-scores}
\end{table*}

\subsection{Implementation Details}
\label{sec: implementation details}
\noindent
We implemented our benchmark and methods in the open source MMDetection toolbox~\cite{mmdetection}.
Our models are based on a Swin-T transformer and a ResNeSt101 backbone, both with a CascadeRoIHead as the object detection head, with a total number of trainable parameters of approx. $72$M and $95$M. 
As hyperparameters for the label error benchmark we choose relative frequency of label errors $\gamma=0.2$, the value for score thresholding after the first stage $s_{\epsilon}=0.25$, the value for score thresholding after the second stage $\tau=0$ and the $\iou$-value $\alpha=0.3$ from which a proposal for a label error is considered a $\tpl$. 
We show performance results for the respective models and for each dataset in \cref{tab: map-table}.
The upper half shows results on original ($\map_{50}$) and noisy training data ($\map_{50}^*$), for which $\gamma=0.2$ also holds.
With sufficient training data and a moderate label error rate ($\gamma=0.2$), the models still generalize well, resulting in  $\map$ values comparable to models trained without simulated label errors. 
Thus, to make benchmarks evaluations trustworthy, whether already published or still under development, in particular the underlying test datasets should contain as few label errors as possible.
The bottom half of \cref{tab: map-table} presents the performance of the models that we use for predicting label errors on real datasets. 
All models have been trained and evaluated on the original datasets (without any label modification).

\paragraph{Datasets}
For the detection of real label errors we use the same split for BDD as introduced in \cref{sec: label error benchmark} as well as VOC, COCO, Kitti and CE. 
The training data for VOC consists of ``2007 train'' + ``2012 trainval'' and we predict label errors on the ``2007 test''-split.
COCO is trained on the train split and label errors are predicted on the validation split from 2017.
For Kitti we use a scene-wise split, resulting in $5$ scenes ($S=\{2,8,10,13,17\}$) and $1,\!402$ images for evaluation as well as $16$ scenes ($\{0,1,\ldots,20\}\setminus S$) and $6,\!407$ images for training.
The subset of CE data used includes $20,\!100$ images for training and $1,\!070$ images for evaluation. 
In the images, a car is in focus and the task is to do a car part detection.
The labels consist of $29$ different classes and divide the car into different parts, \ie the four wheels, doors, number plate, mirrors, bumper, etc.
Compared to the static academic datasets, the CE dataset is dynamic and thus of heterogeneous quality.

\begin{table}
    \centering
    \resizebox{.805\linewidth}{!}{
    \begin{tabular}{l|c c|c c c c} 
    Dataset & Label Error & Prec. & Spawn & Drop & Flip & Shift \\
     \midrule
     BDD & 34 & 15.5 & 3 & 2 & 26 & 3  \\ 
     \midrule
     Kitti & 96 & 47.5 & 75 & 0 & 4 & 17 \\ 
     \midrule
     COCO & 50 & 24.5 & 14 & 1 & 18 & 17 \\ 
     COCO$^{(*)}$ & 125 & 61.0 & 0 & 125 & 0 & 0 \\ 
     \midrule
     VOC & 23 & 11.5 & 13 & 0 & 10 & 0 \\ 
     VOC$^{(*)}$ & 175 & 71.5 & 0 & 175 & 0 & 0 \\ 
     \midrule
     CE$^{(*)}$ & 194 & 97.0 & 0 & 0 & 0 & 0 \\
     \bottomrule
    \end{tabular}
    }
    \caption{Categorization of the top-$200$ proposals for real label errors with the loss method for the Swin-T backbone. $^{(*)}$ indicates the evaluation of proposals based on the detection of \emph{drops}.}
    \label{tab: top-200}
\end{table}

\subsection{Benchmark Results for Simulated Label Errors}
\label{sec: benchmark results}
\noindent
\begin{figure*}
    \centering
    \includegraphics[width=.825\textwidth]{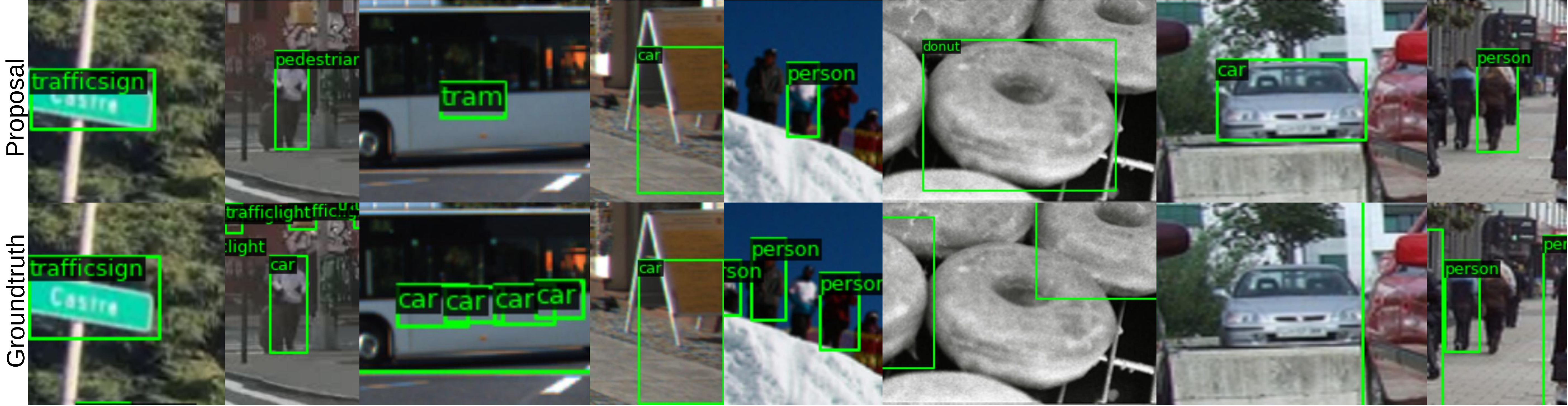}
    \caption{Visualization of detected label errors in real test datasets. The top row of images depicts the label error proposals and the bottom row the corresponding labels from the dataset. The image pairs belong from left to right in steps of two to BDD, Kitti, COCO and VOC.}
    \label{fig: label error collage}
\end{figure*}
\Cref{tab: map-table} shows that although $20\%$ of the training labels are modified, the performance in terms of $\map_{50}$ to $\map_{50}^*$ only decreases by a maximum of $1.2$ percent points (pp) for EMNIST-Det and $3.9$ pp for BDD. 
In both cases, the performance decreases more for the backbone containing more trainable parameters (ResNeSt101). 
This is consistent with the results for image classification from Northcutt \etal~\cite{northcutt2021pervasive}. 
Architectures with fewer trainable parameters seem more suitable for handling label errors in the training data, possibly due to the network having less capacity to overfit the label errors.
\Cref{fig: auroc-f1} shows exemplary plots for $\auroc$ and $\fone$ curves for the Swin-T backbone and BDD.
On the two left plots we show results based on original training data and the two right plots based on noisy training data.
The ranking of the methods is not identical everywhere: in terms of $\auroc$, loss (our method) is superior, followed by detection score, then entropy (our baselines) and finally PD.
In terms of $\max\,\fone$, PD outperforms the detection score and the entropy but is inferior compared to the loss.
Because $\auroc$ considers rates and ($\max$) $\fone$ considers absolute values and the number of label error proposals varies widely (PD = number of labels $G$, here $17,\!064$; others > $80,\!000$), the methods behave very differently with respect to $\auroc$ and $\max\,\fone$.
However, our loss method outperforms all other methods on both metrics.
Note, that the small step in the upper right of each of the $\auroc$ plots are the false negatives according to the label errors ($\fnl$), \ie the simulated label errors that are not found by the methods. 
This number of $\fnl$ is vanishingly small in relation to all simulated label errors, with the exception of PD as the method is not able to detect \emph{drops}.
The generally observed behavior for BDD also does not change when looking at the results for the ResNeSt101 backbone in \cref{tab: auroc-f1-scores}.
When comparing the results for the different backbones with each other the $\auroc$ for the loss and PD seems to remain similar, whereas the $\auroc$ for detection score/entropy increases by $10.65/11.85$ pp for original training data and $1.55/8.76$ pp for noisy training data.
The situation is different for the $\max\,\fone$ values.
For label error detection, loss/entropy/PD performs superior with the Swin-T backbone for original training data ($1.97/1.84/5.50$ pp).
In particular, the loss and PD seem to handle the noisy training data more effectively, resulting in $8.12$ pp $\max\,\fone$ difference between Swin-T and ResNeSt for the loss and $6.98$ pp difference for PD.
The detection score increases by $0.85$ pp with the ResNeSt101 backbone on original training data, but on noisy data the Swin-T outperforms the ResNeSt101 by $6.03$ pp.
Also for EMNIST-Det it holds that the loss outperforms the detection score and both outperform the entropy.
In contrast to the results of BDD, the detection score slightly outperforms PD in all EMNIST-Det experiments also in terms of $\max\,\fone$.
The $\auroc$ for loss appears to be stable across backbone and training data quality with only a maximum $0.47$ pp difference overall. 
The $\auroc$ values for the detection score and entropy are superior with the ResNeSt101 backbone, but inferior in terms of $\max\,\fone$ and the detection score performs superior in terms of $\auroc$ based on noisy training data, but inferior in terms of $\max\,\fone$.
For PD, the $\auroc$ seems to be rather stable comparing the two backbones, but the $\max\,\fone$ is superior for Swin-T.

\subsection{Evaluation for Real Label Errors}
\label{sec: real label error}
\noindent
We now aim at detecting real instead of simulated label errors.
The considered real-world datasets apart from BDD (VOC, COCO, Kitti, CE) are more similar in complexity to BDD than to EMNIST-Det. 
For BDD we observed in \cref{sec: benchmark results} that the loss method for the Swin-T backbone seems to be more stable according to label errors in the training data, as especially the $\max\,\fone$ values for the loss and noisy labels are superior for Swin-T than for ResNeSt101.
As we suspect label errors in the VOC, COCO, Kitti and CE training datasets, we use the Swin-T backbone to detect as many label errors as possible.
Furthermore, we showed in \cref{tab: auroc-f1-scores} that the loss method outperforms the detection score, entropy and PD in each presented experiment,
hence we detect label errors using only the loss method in the following.
Since we manually look at all proposals individually and we are not able to look at all proposals (\ie about $265,\!000$ for VOC), we categorize the top-$200$ proposals into $\tpl$ or $\fpl$. 
If a $\tpl$ is found we also note which type of label error is present and if we are not sure whether the proposal is $\tpl$ or $\fpl$, we conservatively interpret it as $\fpl$.
The results are summarized in \cref{tab: top-200}.
For BDD, there are at least $34$ label errors, which mostly consist of \emph{flips}.
Since Kitti consists of image sequences, it happens that one label error appears on several consecutive frames. 
When this happens, it usually affects objects that are visible on previous frames but are covered by, for instance, a bus for several frames but are still labeled.
Label error proposals that fall into ``Don't Care'' areas are not considered.
In total, we find $96$ label errors with a precision of $47.5\%$ on Kitti. 
As COCO and VOC consist of images of different everyday scenes that really differ from image to image, the variability of the representation of objects is very high in these two datasets.
Since a label error proposal is enforced for each label, this also applies to the labels that are classified as background.
In a usual test setting, these labels would have been false negatives of the model, \ie overlooked labels.
The resulting loss is so high that these proposals end up in the reviewed top-$200$ proposals.
Nevertheless, $50$ label errors can be detected on COCO and $23$ on VOC. 
When dealing with these two datasets, we noticed that \emph{drops} are the most present label error type, although we did not find any among the top-$200$ proposals. 
We use this knowledge to restrict the proposals to those that have a class-independent $\iou$ with the labels of the image of less than $\alpha$.
Using this subset and re-reviewing the top-$200$ proposals, we are able to find $125$ \emph{drops} with a precision of $61.0\%$ for COCO and $175$ \emph{drops} with a precision of $71.5\%$ for VOC. 
For the calculation of the precision see \cref{sec: detection of real label errors}.
Prior knowledge about the label quality of the dataset and the types of label errors that occur helps to detect a specific type of label error. 
From the high precisions for VOC and COCO, we conclude that our method can help to correct the label errors resulting in cleaner benchmarks.
Exemplary label errors for the above datasets are shown in \cref{fig: label error collage}. 
The first proposal detects a \emph{shift}, the second a \emph{flip}, the third and fourth a \emph{spawn} and the remaining proposals detect \emph{drops}. 
For CE, we filter the proposals by \emph{drops}, resulting in 194 detected \emph{drops} with a precision of $97\%$.

\section{Conclusion}
\label{sec: conclusion}
\noindent
In this work, we introduced a benchmark for label error detection for object detection datasets.
We for the first time simulated and evaluated four different types of label errors on two selected datasets.
We also developed a novel method based on instance-wise loss scoring and compare it with four baselines.
Our method prevails by a significant margin in experiments on our simulated label error benchmark.
In our experiments with real label errors, we found a number of label errors in prominent datasets as well as in a proprietary production-level dataset. 
With the evaluation for individual label error types we can detect real label errors on commonly used test datasets in object detection with a precision of up to $71.5\%$.
Furthermore, we presented additional findings. 
Models with less parameters are more robust to label errors in training sets while models with more parameters suffer more. 
We make our code publicly available at \textit{GitHub}.

\clearpage
\newpage

\section*{Acknowledgments}
\noindent
We gratefully acknowledge financial support by the state Ministry of Economy, Innovation and Energy of Northrhine Westphalia (MWIDE) and the European Fund for Regional Development via the FIS.NRW project BIT-KI, grant no.\ EFRE-0400216, by the German Federal Ministry for Economic Affairs and Climate Action within the project “KI Delta Learning“ grant no.\ 19A19013Q and by the German Federal Ministry for Education and Research within the project "UnrEAL" grant no.\ 01IS22069.
We also gratefully acknowledge the \href{www.gauss-centre.eu}{Gauss Centre for Supercomputing e.V.} for funding this project by providing computing time through the John von Neumann Institute for Computing on the GCS Supercomputer JUWELS at Jülich Supercomputing Centre.
M. S., T. R. and M. R. acknowledge helpful discussion with H. Gottschalk about the analysis of the presented method.

{\small
\bibliographystyle{ieee_fullname}
\bibliography{biblio}
}

\clearpage
\newpage

\clearpage
\begin{figure*}[h]
    \centering
    \includegraphics[width=\textwidth]{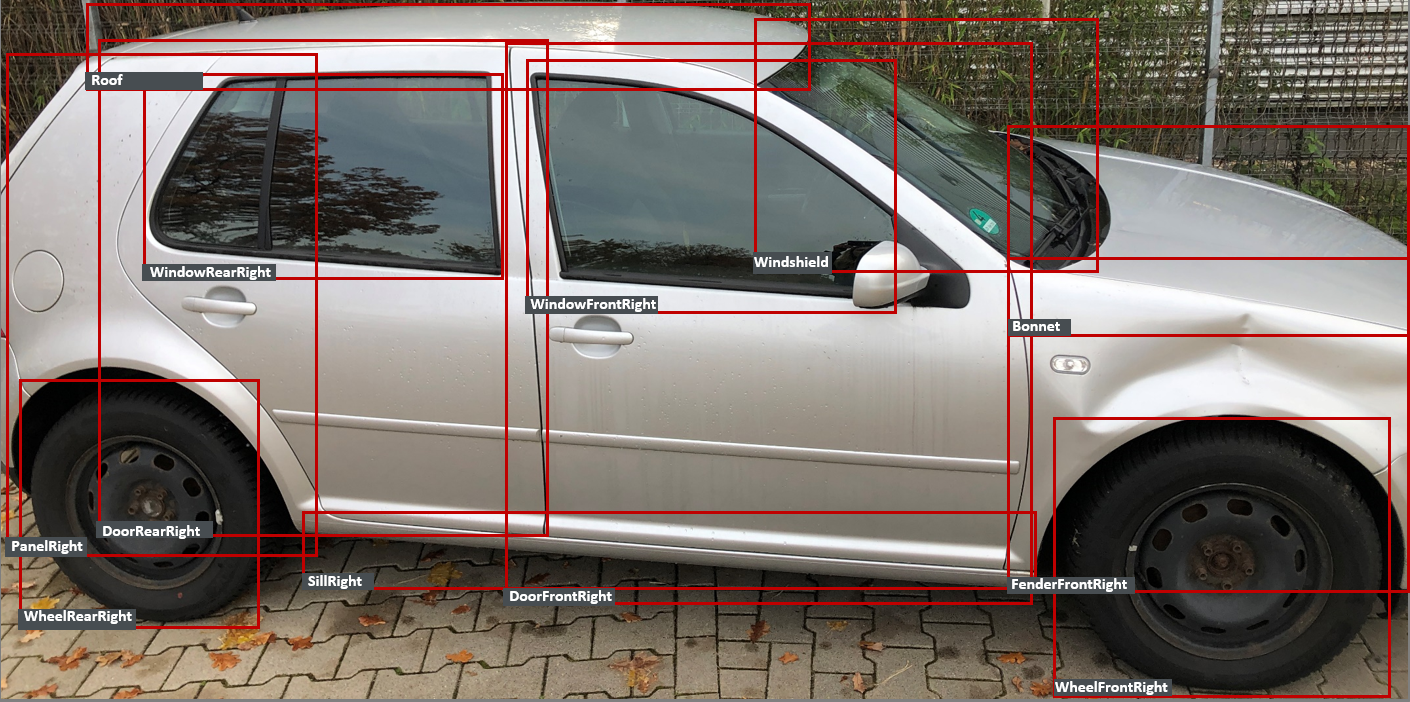}
    \caption{Example image from the CE test data with labels and a missing ``MirrorRight''.}
    \label{fig: ce car}
\end{figure*}
\begin{table*}[h]
    \centering    
    \resizebox{.75\textwidth}{!}{
    \begin{tabular}{l cccc}
    \toprule
    Dataset & Batch Size & Image Resolution & \# Training Iterations & Learning Rate \\
    \toprule
    EMNIST-Det & $24$ & $300\times 300$ & $24,\!000/48,\!000^{(*)}$ & $0.02$ \\
    BDD & $4$ & $1333\times 800$ & $150,\!000/250,\!000^{(*)}$ & $0.01$ \\
    \midrule
    Kitti & $6$ & $1000\times 600$ & $70,\!000$ & $0.01$ \\
    COCO & $12$ & $1000\times 600$ & $250,\!000$ & $0.02$ \\
    VOC & $6$ & $1000\times 600$ & $70,\!000$ & $0.02$ \\
    CE & $4$ & $1000\times 600$ & $200,\!000$ & $0.01$ \\
    \bottomrule
    \end{tabular}
    }
    \caption{Training hyperparameters for the Swin-T and the ResNeSt101 ($^{(*)}$) backbone.}
    \label{tab: training-hyperparameter}
\end{table*}

\section*{Supplementary Material}
\paragraph{Proprietary Dataset}
An exemplary test image including labels for the proprietary dataset (CE) is shown in \cref{fig: ce car}. 
The labels divide the car into parts, such as the two wheels ``WheelFrontRight'' and ``WheelRearRight'' as well as doors, roof, etc. 
The example also includes a \emph{drop} with the missing mirror ``MirrorRight''. 

\paragraph{Dataset Dependent Parameters for Training}
The dataset-dependent hyperparameters for training are stated in \cref{tab: training-hyperparameter}.
The original images from EMNIST-Det have an image resolution of $320\times 320$ pixels, i.e., we do not artificially scale them to a higher image resolution. 
The BDD images also contain many small labels while having a high original resolution ($1280\times 720$), which is a challenging setup. 
To get the best possible label error detection, we keep this high resolution and rescale the images to $1333\times 800$ pixels. 
Kitti, COCO, VOC and CE are each rescaled to an image resolution of $1000\times 600$ pixels. 
The batch size for all datasets is in the range of $4$-$24$, the initial learning rate is either $0.02$ or $0.01$ depending on the dataset, and the number of training iterations is in the range of $24,\!000$-$250,\!000$.
All numbers apply to the Swin-T backbone except the numbers $^{(*)}$ for the training iterations of EMNIST-Det and BDD, which apply to the ResNeSt101 backbone. 
All other hyperparameters are identical for the different architectures. 
The files for the configurations used in training, also containing the precise values of the above hyperparameters, are published with the code on \textit{GitHub}.

\paragraph{Benchmark Results for Individual Simulated Label Error Types}
In our experiments, all label errors occur simultaneously, but the evaluation can also be conditioned on the individual label error types.
For \emph{drops} or \emph{flips} we consider only the false positives according to $\tilde{y}$, \ie all boxes that have a maximum class-wise $\iou$ of less than $\alpha$($=0.3$) with all noisy labels of the associated image.
Then, we can calculate $\auroc$ and $\max\,\fone$ values on this subset.
We do the same for the \emph{shifts}, except that we only consider the true positives according to $\tilde{\mathcal{Y}}$. 
For the \emph{spawns}, we must consider both true positives and false positives according to $\tilde{\mathcal{Y}}$, since the predicted class, that overlaps sufficiently with the \emph{spawned} label, can be the same as the class of the \emph{spawn} itself.

The benchmark results for individual simulated label errors are stated in \cref{tab: auroc-f1-scores-appendix}.
For \emph{drops}, the detection score and instance-wise loss perform similarly well, with the $\auroc$ values differing by at most $1.92$ pp and a minimal $\auroc$ of $91.72\%$.
The difference in the $\max\,\fone$ values is more pronounced, with the loss at EMNIST-Det outperforming the detection score by $3.03$ to $6.56$ pp.
For BDD, the detection score of Swin-T is superior to the loss by up to $9$ pp, whereas the loss for ResNeSt101 outperforms the detection score by up to $10.22$ pp.
The entropy reaches a maximum of $88.22\%/56.70\%$ $\auroc/\max\,\fone$ for EMNIST-Det and $73.14\%/7.11\%$ for BDD, which is far from the numbers achieved for the loss and the detection score.
PD is not able to detect \emph{drops}, as the bounding boxes of the labels are also the label error proposals itself. 

A similar behavior can be observed for the \emph{flips}, where the $\auroc$ values for loss and detection score only differ by a maximum of $1.64$ pp. 
In terms of $\max\,\fone$ the loss outperforms the detection score and entropy in every case.
PD performs inferior in terms of $\auroc$ compared to the loss, but in terms of $\max\,\fone$ PD outperforms the loss for BDD based on both backbones trained on noisy data.

For the \emph{shifts}, the detection score and PD have similar performance as the naive baseline in terms of $\auroc$ and all $\max\,\fone$ values are $<14\%$. 
Except for BDD trained on noisy data, where entropy performs superior to the loss, loss outperforms all baselines.

For the \emph{spawns}, the detection score performs similar compared to the \emph{shifts}.
PD performs well especially in terms of $\max\,\fone$, where PD even outperforms the loss for RestNeSt101 on BDD with noisy training data by $20.16$ pp, otherwise loss is superior to PD.
In the cases where entropy outperforms loss, the difference is at most $5.44$ pp in terms of $\auroc$ and $3.69$ pp in terms of $\max\,\fone$.

The detection score can neither reliably detect the \emph{shifts} nor the \emph{spawns}, whereas the entropy cannot detect the \emph{drops} and \emph{flips} well, especially for complicated problems such as BDD.
PD cannot reliably detect the \emph{shifts} and is not able to detect \emph{drops} by design.
All in all, the loss method is the only one of those presented that can detect all four different types of label errors efficiently. 

\begin{table*}
    \centering    
    \resizebox{.985\textwidth}{!}{
    \begin{tabular}{cccc | cccc | cccc}
    \toprule
     & \multicolumn{3}{c|}{} & \multicolumn{4}{c|}{$\auroc$} & \multicolumn{4}{c}{$\max\,\fone$} \\
    Label Error Type & Dataset & Backbone & Train Labels & Loss & Score & Entropy & PD & Loss & Score & Entropy & PD \\
     \toprule
     \multirow{8}{*}{Drop} & EMNIST-Det & Swin-T & Original & \underline{98.94} & \textbf{99.12} & 88.16 & 0.00 & \textbf{94.91} & \underline{89.63} & 56.70 & 0.00 \\
      & EMNIST-Det & Swin-T & Noisy & \underline{98.85} & \textbf{99.19} & 88.22 & 0.00 & \textbf{93.27} & \underline{90.24} & 48.97 & 0.00 \\
      & EMNIST-Det & ResNeSt101 & Original & \textbf{99.66} & \underline{99.65} & 78.33 & 0.00 & \textbf{93.58} & \underline{87.02} & 32.82 & 0.00 \\
      & EMNIST-Det & ResNeSt101 & Noisy & \underline{99.91} & \textbf{99.94} & 78.79 & 0.00 & \textbf{86.42} & \underline{81.03} & 19.21 & 0.00 \\
      \cmidrule{2-12} 
      & BDD & Swin-T & Original & \underline{94.92} & \textbf{96.05} & 51.48 & 0.00 & \underline{41.80} & \textbf{48.38} & 2.37 & 0.00 \\
      & BDD & Swin-T & Noisy & \underline{91.72} & \textbf{93.64} & 52.88 & 0.00 & \underline{37.93} & \textbf{46.93} & 1.45 & 0.00 \\
      & BDD & ResNeSt101 & Original & \textbf{94.52} & \underline{93.61} & 73.14 & 0.00 & \textbf{45.89} & \underline{35.67} & 7.11 & 0.00 \\
      & BDD & ResNeSt101 & Noisy & \textbf{91.89} & \underline{91.84} & 62.25 & 0.00 & \textbf{26.29} & \underline{22.62} & 1.75 & 0.00 \\
     \midrule
     \multirow{8}{*}{Flip} & EMNIST-Det & Swin-T & Original & \underline{99.74} & \textbf{99.78} & 91.09 & 99.34 & \textbf{92.89} & \underline{90.08} & 59.51 & 86.79 \\
      & EMNIST-Det & Swin-T & Noisy & \underline{99.62} & \textbf{99.83} & 90.79 & 99.51 & \textbf{89.42} & \underline{88.70} & 49.32 & 87.44 \\
      & EMNIST-Det & ResNeSt101 & Original & \underline{99.96} & \textbf{99.97} & 78.95 & 99.07 & \textbf{90.77} & \underline{86.70} & 31.65 & 82.93 \\
      & EMNIST-Det & ResNeSt101 & Noisy & \underline{99.89} & \textbf{99.94} & 78.50 & 98.83 & \textbf{81.49} & 80.35 & 18.98 & \underline{80.49} \\
      \cmidrule{2-12} 
      & BDD & Swin-T & Original & \textbf{99.68} & 98.36 & 50.63 & \underline{98.53} & \textbf{74.54} & 58.79 & 2.75 & \underline{73.86} \\
      & BDD & Swin-T & Noisy & \textbf{99.56} & 98.12 & 50.06 & \underline{98.32} & \underline{60.31} & 58.91 & 2.13 & \textbf{71.23} \\
      & BDD & ResNeSt101 & Original & \textbf{99.80} & \underline{98.16} & 75.93 & 97.96 & \textbf{72.81} & 54.38 & 7.12 & \underline{69.95} \\
      & BDD & ResNeSt101 & Noisy & \textbf{99.31} & \underline{97.24} & 64.34 & 97.13 & \underline{44.94} & 40.15 & 2.18 & \textbf{61.75} \\
     \midrule
     \multirow{8}{*}{Shift} & EMNIST-Det & Swin-T & Original & \textbf{99.80} & 51.52 & \underline{93.55} & 40.71 & \textbf{91.76} & 11.14 & \underline{49.41} & 10.61 \\
      & EMNIST-Det & Swin-T & Noisy & \textbf{99.56} & 50.26 & \underline{88.01} & 50.70 & \textbf{87.86} & 10.92 & \underline{40.71} & 10.88 \\
      & EMNIST-Det & ResNeSt101 & Original & \textbf{99.67} & 51.28 & \underline{86.14} & 45.54 & \textbf{88.65} & 11.28 & \underline{30.32} & 10.56 \\
      & EMNIST-Det & ResNeSt101 & Noisy & \textbf{99.30} & 53.73 & \underline{80.52} & 51.91 & \textbf{85.97} & 13.99 & \underline{25.65} & 10.95 \\
      \cmidrule{2-12} 
      & BDD & Swin-T & Original & \textbf{65.49} & 51.76 & \underline{61.17} & 50.24 & \textbf{16.78} & 11.22 & \underline{14.99} & 10.55 \\
      & BDD & Swin-T & Noisy & \underline{57.23} & 52.57 & \textbf{57.91} & 52.85 & \underline{12.73} & 11.44 & \textbf{12.88} & 10.94 \\
      & BDD & ResNeSt101 & Original & \textbf{65.84} & 51.51 & \underline{63.37} & 54.19 & \textbf{17.56} & 11.40 & \underline{14.76} & 11.86 \\
      & BDD & ResNeSt101 & Noisy & \underline{55.92} & 50.85 & \textbf{56.18} & 52.54 & \textbf{12.58} & 10.87 & \underline{12.17} & 11.13 \\
     \midrule
     \multirow{8}{*}{Spawn} & EMNIST-Det & Swin-T & Original & \textbf{99.37} & 75.62 & \underline{97.92} & 97.04 & \textbf{98.87} & 19.89 & 65.08 & \underline{78.97} \\
      & EMNIST-Det & Swin-T & Noisy & \textbf{99.68} & 50.95 & \underline{98.48} & 97.16 & \textbf{97.77} & 19.26 & 59.18 & \underline{79.33} \\
      & EMNIST-Det & ResNeSt101 & Original & \textbf{99.84} & 57.98 & \underline{99.40} & 96.12 & \textbf{98.06} & 18.96 & 37.84 & \underline{74.19} \\
      & EMNIST-Det & ResNeSt101 & Noisy & \textbf{99.93} & 76.31 & \underline{99.33} & 94.89 & \textbf{94.93} & 15.89 & 35.39 & \underline{67.92} \\
      \cmidrule{2-12} 
      & BDD & Swin-T & Original & \textbf{98.48} & 66.33 & \underline{98.07} & 92.09 & \textbf{74.97} & 2.23 & 20.24 & \underline{50.81} \\
      & BDD & Swin-T & Noisy & \underline{90.55} & 78.13 & \textbf{92.98} & 78.00 & \textbf{17.98} & 9.21 & \underline{11.32} & 10.94 \\
      & BDD & ResNeSt101 & Original & \underline{95.80} & 79.79 & \textbf{97.00} & 87.57 & \textbf{60.38} & 5.04 & 13.75 & \underline{38.71} \\
      & BDD & ResNeSt101 & Noisy & \underline{90.30} & 89.19 & \textbf{95.74} & 76.07 & 7.39 & 6.92 & \underline{11.08} & \textbf{28.55} \\
     \bottomrule
    \end{tabular}
    }
    \caption{$\auroc$ and $\max\,\fone$ values for loss, detection score (Score), entropy and PD for all dataset-backbone-training label combinations; higher values are better. Bold numbers indicate the highest $\auroc$ or $\max\,\fone$ per experiment and underlined numbers are the second highest.}
    \label{tab: auroc-f1-scores-appendix}
\end{table*}

\paragraph{Different Noise Intensity in Training}
\Cref{tab: noise-ablation} shows $\map$, $\auroc$ and $\max\,\fone$ values for different noise intensities for Swin-T on the BDD training dataset. 
In our experiments, it makes no difference whether the labels of the training data contain $5\%$ or $20\%$ noise, the $\map$ is between $50.2\%$ and $50.4\%$, where the model has an $\map$ of $52.1\%$ due to training on the original training data. 
All $\map$ evaluations are based on the test data with original and thus unmodified labels.

On the one hand, the $\auroc$/$\max\,\fone$ values decrease with increasing noise intensity by $3.69$/$20.62$ pp for loss, by $2.31$/$3.88$ pp for entropy and by $3.01$/$17.94$ pp for PD, respectively.
For the detection score, on the other hand, the $\auroc$ value increases by $12.39$ pp from $76.82\%$ to $89.21\%$ and the $\max\,\fone$ value increases only marginally by $0.54$ pp to $31.68\%$. 
Nevertheless, the loss outperforms the detection score/entropy/PD in every case by at least $3.40$/$21.68$/$35.71$ pp in terms of $\auroc$ and by at least $4.29$/$17.64$/$0.38$ pp in terms of $\max\,\fone$.
All $\auroc$/$\max\,\fone$ evaluations are based on the test data with $\gamma=0.2$ and thus on the identical label basis as for \cref{tab: auroc-f1-scores}.

\begin{table*}
    \centering    
    \resizebox{.985\linewidth}{!}{
    \begin{tabular}{ccc|cccc|cccc} 
    \toprule
    \multicolumn{2}{c}{} &  & \multicolumn{4}{c|}{$\auroc$} & \multicolumn{4}{c}{$\max\,\fone$} \\
     $\gamma$ & \# train images & $\map_{50}$ & Loss & Detection Score & Entropy & PD & Loss & Detection Score & Entropy & PD \\
     \toprule
     0 & 12,454 & $52.1$ & \textbf{96.30} & \underline{76.82} & 71.73 & 60.59 & \textbf{56.59} & 31.14 & 22.21 & \underline{52.66} \\
     0.05 & 12,454 & $50.4$ & \textbf{93.44} & \underline{88.09} & 71.76 & 59.16 & \textbf{43.36} & 30.78 & 18.54 & \underline{42.98} \\
     0.1 & 12,454 & $50.2$ & \textbf{93.21} & \underline{89.05} & 70.98 & 58.56 & \textbf{39.36} & 30.79 & 18.53 & \underline{37.92} \\
     0.2 & 12,454 & $50.3$ & \textbf{92.61} & \underline{89.21} & 69.42 & 57.58 & \textbf{35.97} & 31.68 & 18.33 & \underline{34.72} \\
     \bottomrule
    \end{tabular}
    }
    \caption{Validation of object detection performance and label error detection experiments for different noise for training Swin-T on BDD; higher values are better. Bold numbers indicate the highest $\auroc$ or $\max\,\fone$ per experiment and underlined numbers are the second highest.}
    \label{tab: noise-ablation}
\end{table*}

\begin{table*}
    \centering    
    \resizebox{.985\linewidth}{!}{
    \begin{tabular}{ccc|cccc|cccc} 
    \toprule
    \multicolumn{2}{c}{} &  & \multicolumn{4}{c|}{$\auroc$} & \multicolumn{4}{c}{$\max\,\fone$} \\
     $\gamma$ & \# train images & $\map_{50}$ & Loss & Detection Score & Entropy & PD & Loss & Detection Score & Entropy & PD \\
     \toprule
     0 & 1,556 & $45.1$ & \textbf{94.79} & 69.56 & \underline{72.61} & 59.86 & \textbf{58.67} & 30.59 & 25.95 & \underline{50.77} \\
     0 & 3,113 & $49.7$ & \textbf{95.25} & \underline{73.69} & 72.70 & 59.93 & \textbf{56.48} & 31.38 & 24.64 & \underline{50.13} \\
     0 & 6,227 & $51.3$ & \textbf{95.18} & \underline{74.95} & 73.21 & 60.12 & \textbf{55.79} & 31.55 & 24.52 & \underline{49.78} \\
     0 & 12,454 & $52.1$ & \textbf{96.30} & \underline{76.82} & 71.73 & 60.59 & \textbf{56.59} & 31.14 & 22.21 & \underline{52.66} \\
     \midrule
     0.2 & 1,556 & $40.7$ & \textbf{94.82} & \underline{84.98} & 73.53 & 58.73 & \textbf{44.47} & 27.07 & 18.72 & \underline{33.87} \\
     0.2 & 3,113 & $46.9$ & \textbf{93.35} & \underline{88.90} & 70.31 & 58.98 & \textbf{35.45} & 28.38 & 18.28 & \underline{33.45} \\
     0.2 & 6,227 & $49.1$ & \textbf{93.08} & \underline{90.31} & 70.80 & 58.37 & \textbf{33.60} & 30.38 & 18.18 & \underline{33.43} \\
     0.2 & 12,454 & $50.3$ & \textbf{92.61} & \underline{89.21} & 69.42 & 57.58 & \textbf{35.97} & 31.68 & 18.33 & \underline{34.72} \\
     \bottomrule
    \end{tabular}
    }
    \caption{Validation of object detection performance and label error detection experiments for different noise and number of images for training Swin-T on BDD; higher values are better. Bold numbers indicate the highest $\auroc$ or $\max\,\fone$ per experiment and underlined numbers are the second highest.}
    \label{tab: image-ablation}
\end{table*}

\paragraph{Different Amount of Training Images}
\Cref{tab: image-ablation} shows $\map$, $\auroc$ and $\max\,\fone$ values for different amounts of training images for Swin-T on BDD.
Therefore, the subsets with fewer images are always included in the subsets with more images and the identically sized subsets with different noise intensities contain the same images.

The $\map$ increases the more images are used for training and the less label errors exist in the training data.
Here, the model trained on $6,\!227$ and unmodified labels ($\gamma=0$) has a $0.8$ points higher $\map$ than the model trained on $12,\!454$ images with $\gamma=0.2$. 
In this case, after comparing the performances, it is worth to review and improve the underlying labels instead of labeling new images and add them to the training set.

The $\auroc$ values increase as the number of images increases with $\gamma=0$.
With $\gamma=0.2$, the values for loss and entropy decrease as the number of images increases. 
The $\max\,\fone$ values decrease independently of $\gamma$ with increasing number of images for loss and entropy, whereas the values increase for detection score. 
The decrease in $\auroc$ and $\max\,\fone$ values for loss and entropy could be due to overfitting of the model.
For PD, $\auroc$ and $\max\,\fone$ values remain almost constant for the respective datasets.
However, the loss always outperforms all baselines in terms of $\auroc$ and $\max\,\fone$.
All $\auroc$/$\max\,\fone$ evaluations are based on the test data with $\gamma=0.2$ and thus on the identical label basis as for \cref{tab: auroc-f1-scores}.

\paragraph{Real Label Errors}
Further detected real label errors are presented in \cref{fig: label-error-collage-appendix}.
The top row shows examples for BDD, where all found label errors are \emph{flips}, except for the third proposal from the right.
This proposals can be interpreted as two label errors.
Either the ``car'' label on the ``bus'' is wrong (\emph{spawn}) and the bus was forgotten to be labeled (\emph{drop}), or the localization is inaccurate (\emph{shift}) and the label has a wrongly assigned class (\emph{flip}).
The middle and bottom rows represent detected real label errors on COCO and VOC. 
All proposals show \emph{drops} and at the fourth proposal from the left in the middle row ``pizza'', the two small labels ``pizza'' are also count as \emph{spawns} resulting in three label errors for this single proposal.

\begin{figure*}
    \centering
    \begin{subfigure}[c]{0.995\textwidth}
    \resizebox{\linewidth}{!}{
    \includegraphics[]{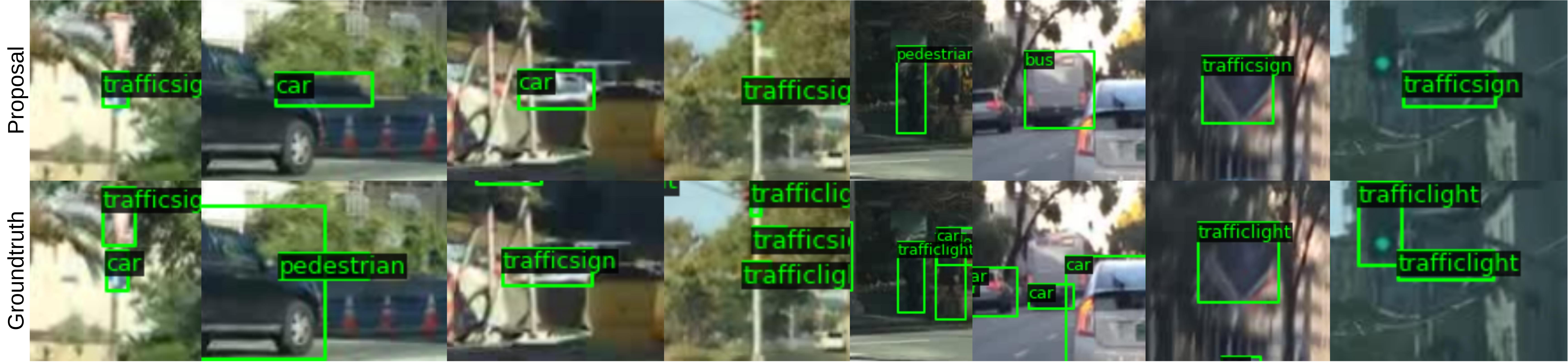}
    }
    \subcaption{BDD}
    \end{subfigure}
    
    \begin{subfigure}[c]{0.995\textwidth}
    \resizebox{\linewidth}{!}{
    \includegraphics[]{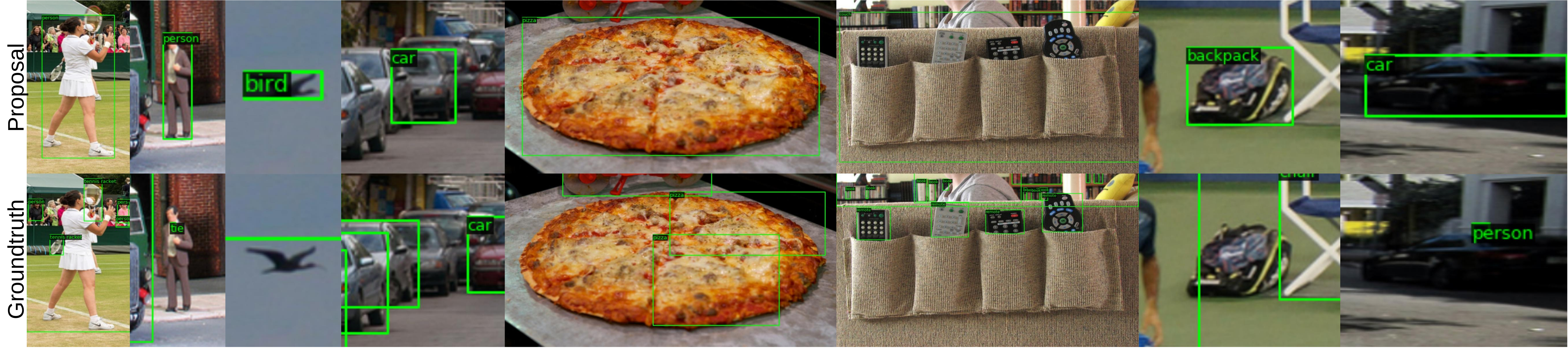}
    }
    \subcaption{COCO}
    \end{subfigure}
    
    \begin{subfigure}[c]{0.995\textwidth}
    \resizebox{\linewidth}{!}{
    \includegraphics[]{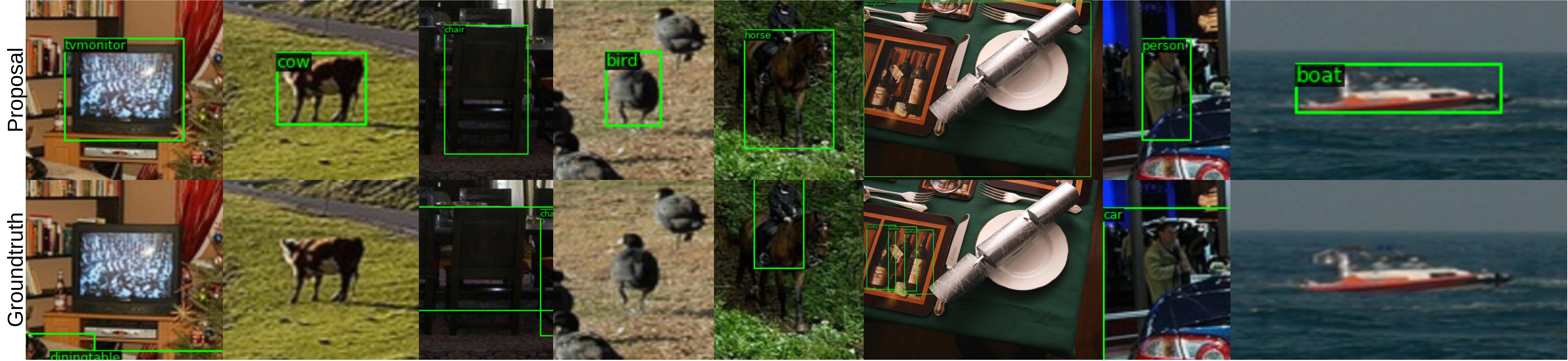}
    }
    \subcaption{VOC}
    \end{subfigure}
    \caption{Visualization of further detected real label errors in test datasets for BDD (top), COCO (center) and VOC (below).}
    \label{fig: label-error-collage-appendix}
\end{figure*}

\paragraph{Theoretical Justification of our Instance-Wise Loss Method}
\begin{figure}
    \centering
        \resizebox{0.55\textwidth}{!}{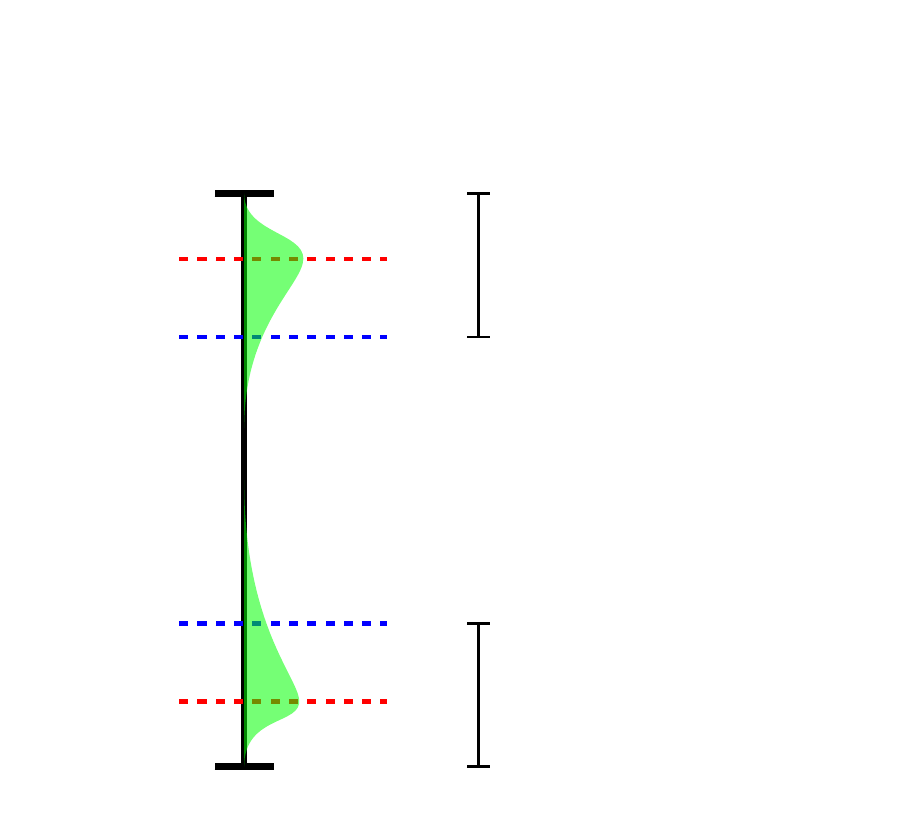}
    \caption{Illustration of the probabilistic statement about predicted confidences conditioned to correct and incorrect given labels.
    PAC learning leads to concentration of the confidences around \(1 - \pf\) and \(\tfrac{\pf}{C - 1}\), respectively.
    The separation on the confidences carries over to the cross entropy loss.}\label{fig:classification-illustration}
\end{figure}
 Our goal is to show that the flip of a test label is statistically captured by the cross entropy loss evaluated at a deep neural network's (DNN\footnote{Technically, it is not required that the model is a DNN as long as PAC-learnability is fulfilled.}) prediction on a test sample \(x\) and the corresponding label \(y\).

The rough intuition for this statement is that a probably approximate correct learner \cite{shalev2014understanding} (PAC-learner) \(\widehat{p}\) has probabilistic bounds for having predictive distribution close to the data-generating distribution \(p\).
Therefore, sufficient data sampling and empirical loss minimization will lead to statistical concentration of confidences \(\widehat{p}\) around \(p\).
If \(p\) does not suffer from too strong label noise, we obtain separation between confidences on incorrect and confidences on correct labels.
This separation then carries over to the negative log-likelihood (i.e., cross entropy) loss by monotony.

 We assume data points \((x, y) \sim p\) following some noisy data generating distribution \(p\), where \(x \sim P_x\) follows a marginal distribution $P_x$.
 In practice, training and test data originate from the same data pool and we do not see any reason to assume that they follow different labeling procedures.
 However, it is sufficient to require that for testing data \((x, y)\), \(x\) follows the same marginal distribution \(x \sim P_x\).
Our proof builds on the existence of a true labeling function \(f: x \mapsto y\) and the assumption that the data distribution \(p\) introduces stochastic flips of labels that occur with a fixed uniform rate \(\pf \in [0, 1)\).
This flip probability \(\pf\) uniformly distributed over all \(C - 1\) incorrect classes which are not \(f(x)\) leads to the following constraints on \(p\) when conditioned to \(x\):
\begin{equation}
    p(f(x)|x) := 1 - \pf,
    \quad
    p(y | x) := \pf / (C - 1)
\end{equation}
$\forall y \neq f(x)$.
A statistical model \(\widehat{p} = \widehat{p}(y | x)\) PAC-learns classification on samples of the (noisy) data generating distribution \(p = p(y | x)\).
In the present treatment, we assume PAC-learning with respect to the Kullback-Leibler (KL) divergence \(\DKL(p(\cdot | x) \| \widehat{p}(\cdot | x)) = - \int \log \left(\frac{\mathrm{d} \widehat{p}(y | x)}{\mathrm{d} p(y | x)}\right) p(y | x) \, \mathrm{d} y\).
In the following our goal is to show probabilistic statements about the cross entropy loss
\begin{equation}
    \CE(\widehat{p}(x)\|y) := - \sum_{c = 1}^C y_c \cdot \log(\widehat{p}_c(x))
\end{equation}
on test data pairs \((x, y)\).
We show that the loss is above a certain threshold if an incorrect label is given and below some threshold in case of a correct, non-flipped label.
Non-overlapping intervals indicate that the statistical separation between losses given correct and false labels seen in our experiments can be explained theoretically.

We assume PAC-learnability for the proof.
This assumption can be justified via the error decomposition of empirical risk minimization for the KL divergence over the hypothesis space \(\scH\) with training data \(\{(x_1, y_1), \ldots, (x_n, y_n)\}\):
    \begin{align}
        \begin{split}
            &\mathsf{D}(p \| \widehat{p}) :=\mathbb{E}_{x \sim p_x}[\DKL(p(\cdot | x) \| \widehat{p}(\cdot | x))] \\
            &\,\leq \inf_{h \in \scH} \mathsf{D}(p \| h) \\
            &\,+\left(\frac{1}{n} \sum_{j = 1}^n\CE(\widehat{p}(x_j)\|y_j) - \inf_{h \in \scH}\CE(h(x_j) \| y_j) \right) \\
            &\,+ 2 \cdot \sup_{h \in \scH} \left| \mathsf{D}(p \| h) - \frac{1}{n} \sum_{j = 1}^n \CE(h(x_j)\| y_j) - H(p(\cdot | x)) \right| \\
            &\,< \varepsilon
        \end{split}
    \end{align}
    where \(H = - \sum_{c = 1}^C p(c | x) \cdot \log(p(c | x))\) is the entropy of the data generating distribution\footnote{Together with the cross entropy \(\CE\), the entropy \(H\) yields an unbiased risk function for \(\DKL\).}.
    The first term is the model misspecification error given by \(\scH\).
    In practice, we assume an expressive DNN with a large amount of capacity (appealing to universal approximation) which allows for this error to be negligible.
    In particular, in this case, no restrictions need to be made in the choice of \(\scH\).
    The second term measures the error of the learning algorithm w.r.t.\ an empirical risk minimizer \(h\).
    Similarly to the  term, an expressive DNN trained to convergence leads to small contributions by this term.
    Lastly, the third term is the sampling error made as compared to the loss \(\mathsf{D}(p \| h)\) in the true distribution.
    The third term can be controlled by application of concentration inequalities and chaining under certain assumptions (see \cite{van2014probability}) which is why the sum of the three terms can be made smaller than some fixed \(\varepsilon > 0\) given sufficient amount of data.

\setcounter{prop}{0}
\begin{prop}[Statistical Separation of the Cross Entropy Loss]
    Let training and testing labels be given under a stochastic flip in \(p(\cdot | x)\) with probability \(p_\mathrm{F}\) as above, let the label distribution \(p(\cdot|x)\) be PAC-learnable by the hypothesis space of \(\widehat{p}(\cdot | x)\) w.r.t.\ \(\DKL\) (to precision \(\varepsilon\) and confidence \(1 - \delta\)) and let \(\kappa > 0\).
    If \(\pf < \tfrac{C - 1}{C} (1 - 2\kappa)\), we obtain strict separation of the loss function
    \begin{align}
        \begin{split}
            \CE(\widehat{p}(x) \| f(x)) &< - \log(1 - \pf - \kappa) \\ 
            &< - \log(\kappa + \tfrac{\pf}{C - 1}) < \CE(\widehat{p}(x) \| \widetilde{y})
        \end{split}
    \end{align}
    for any incorrect label \(\widetilde{y} \neq f(x)\) with probability \(1 - \delta\) over chosen training data and with probability \(1 - \tfrac{2 \varepsilon}{\kappa^2}\) over the choice of \(x\).
\end{prop}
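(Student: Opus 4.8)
The plan is to transport the PAC-learnability guarantee from the Kullback--Leibler divergence down to individual predicted confidences by a single concentration step, and then to push the resulting confidence bounds through the monotone map \(t \mapsto -\log t\). First I would invoke PAC-learnability: with probability at least \(1 - \delta\) over the draw of training data, the learned model satisfies \(\mathsf{D}(p \| \widehat{p}) = \mathbb{E}_{x \sim P_x}[\DKL(p(\cdot|x)\|\widehat{p}(\cdot|x))] < \varepsilon\). Conditioning on this event fixes \(\widehat{p}\), so that \(x \mapsto \DKL(p(\cdot|x)\|\widehat{p}(\cdot|x))\) is a nonnegative random variable with mean below \(\varepsilon\); applying Markov's inequality at level \(\kappa^2/2\) yields \(\DKL(p(\cdot|x)\|\widehat{p}(\cdot|x)) < \kappa^2/2\) with probability exceeding \(1 - 2\varepsilon/\kappa^2\) over the test input \(x\). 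On that event I would apply Pinsker's inequality in the form \(\|p(\cdot|x) - \widehat{p}(\cdot|x)\|_1 \le \sqrt{2\,\DKL(p(\cdot|x)\|\widehat{p}(\cdot|x))} < \kappa\), which in particular gives the coordinate-wise estimate \(|p(c|x) - \widehat{p}(c|x)| < \kappa\) for every class \(c\).

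Next I would substitute the label-noise model. Since \(p(f(x)|x) = 1 - \pf\) and \(p(\widetilde{y}|x) = \pf/(C-1)\) for every \(\widetilde{y} \neq f(x)\), the coordinate-wise bound becomes \(\widehat{p}(f(x)|x) > 1 - \pf - \kappa\) and \(\widehat{p}(\widetilde{y}|x) < \pf/(C-1) + \kappa\). The hypothesis \(\pf < \tfrac{C-1}{C}(1 - 2\kappa)\) forces \(1 - \pf - \kappa > \kappa > 0\), so both arguments lie in \((0,1]\); since the cross entropy of a one-hot label reduces to a single logarithm, monotonicity of \(-\log\) gives \(\CE(\widehat{p}(x)\|f(x)) = -\log\widehat{p}(f(x)|x) < -\log(1-\pf-\kappa)\) and \(\CE(\widehat{p}(x)\|\widetilde{y}) = -\log\widehat{p}(\widetilde{y}|x) > -\log(\pf/(C-1)+\kappa)\).

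It then remains to establish the middle inequality \(-\log(1-\pf-\kappa) < -\log(\pf/(C-1)+\kappa)\), which by monotonicity is equivalent to \(1 - \pf - 2\kappa > \pf/(C-1)\); clearing the denominator and collecting terms rearranges this precisely into \(\pf < \tfrac{C-1}{C}(1 - 2\kappa)\), i.e.\ the standing assumption, so the chain closes. Composing the two independent events (good training draw, typical test input) then yields the stated probabilities \(1 - \delta\) and \(1 - 2\varepsilon/\kappa^2\).

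The main technical care lies in the bookkeeping: the \(1-\delta\) randomness is over the training sample while the \(1 - 2\varepsilon/\kappa^2\) randomness is over \(x\), and these must be treated as nested conditional statements rather than a single union bound, and the Markov level must be chosen (namely \(\kappa^2/2\)) so that the Pinsker constant produces exactly the slack \(\kappa\) demanded on each coordinate. A minor point worth recording is that Pinsker is applied in the direction \(\DKL(p\|\widehat{p})\) that PAC-learnability actually controls, which causes no trouble since total variation is symmetric. Everything beyond these bookkeeping choices is elementary manipulation of logarithms and of the parameters \(\pf, C, \kappa\).
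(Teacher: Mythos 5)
Your proof is correct and follows essentially the same route as the paper's: PAC-learnability gives the expected-KL bound with probability \(1-\delta\), Markov's inequality at level \(\kappa^2/2\) combined with Pinsker's inequality yields the coordinate-wise confidence bound \(|p(c|x)-\widehat{p}(c|x)|<\kappa\) with probability \(1-2\varepsilon/\kappa^2\) over \(x\), and monotonicity of \(-\log\) plus the rearrangement of \(1-\pf-\kappa>\kappa+\pf/(C-1)\) into \(\pf<\tfrac{C-1}{C}(1-2\kappa)\) closes the chain. The only cosmetic difference is that you apply Markov before Pinsker whereas the paper folds both into a single probability chain via the total variation distance; the bookkeeping of the two probability levels matches the paper exactly.
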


\begin{proof}
    We aim at bounding \(\max_{y = 1, \ldots, C} |p(y | x) - \widehat{p}(y | x)|\) by the total variation distance.
    PAC-learnability asserts that given enough data, the \(\widehat{p}\)-distributions illustrated in Fig.~\ref{fig:classification-illustration} are concentrated around \(1 - \pf\) for true labels and \(\tfrac{\pf}{C - 1}\) for incorrect labels.
    In particular, PAC-learnability implies
    \begin{equation}
        \mathbb{E}_{x \sim p_x} [\DKL(p(\cdot | x) \| \widehat{p}(\cdot | x))] < \varepsilon
    \end{equation}
    with probability \(1 - \delta\) over the choice of training data.
    Let \(\kappa > 0\).
    From this PAC result, we derive bounds for the probability of \(\max_{y = 1, \ldots, C} |p(y | x) - \widehat{p}(y | x)|\) exceeding \(\kappa\) via the total variation distance.
    We have
    \begin{align}
        \begin{split}
            P_x&(\| p(\cdot | x) - \widehat{p}(\cdot | x)\|_{\mathrm{TV}} \geq \kappa) \\
        \leq& \, P_x(\sqrt{2 \DKL(p(\cdot | x) \| \widehat{p}(\cdot | x))} \geq \kappa) \\
        \leq& \, P_x\left(\DKL(p(\cdot | x) \| \widehat{p}(\cdot | x)) \geq \frac{\kappa^2}{2}\right) \\
        \leq& \, \frac{2}{\kappa^2} \mathbb{E}_{x \sim p_x} [\DKL(p(\cdot | x) \| \widehat{p}(\cdot | x))] < \frac{2 \varepsilon}{\kappa^2}
        \end{split}
    \end{align}
    with probability \(1 - \delta\) over the choice of training data.
    Here, the first inequality is the application of Pinsker's inequality and the third due to the Markov inequality.
    
    Assume that we are given a correct label \(y\) for \(x\), then with probability \(1 - \delta\) over training data and with probability \(1 - \tfrac{2 \varepsilon}{\kappa^2}\) over sampling \(x\), we have that 
    \begin{align}
        \begin{split}
            |p(y | x) - \widehat{p}(y | x)| =& \,
        |(1 - \pf) - \widehat{p}(y | x)| \\
        \leq&\, \max_{y} |p(y | x) - \widehat{p}(y | x)| \\
        \leq&\, \|p(\cdot | x) - \widehat{p}(\cdot | x) \|_{\mathrm{TV}} < \kappa.
        \end{split}
    \end{align}
    This implies \(\widehat{p}(y | x) > 1 - \pf - \kappa\)
    and therefore, by monotony of the logarithm \(\CE(\widehat{p}(y|x) \| y) < - \log(1 - \pf - \kappa)\).
    Similarly, if \(y\) is any incorrect label, we have the probabilistic statement
    \begin{align}
    \begin{split}
        |p(y | x) - \widehat{p}(y | x)| &=
        \left| \frac{\pf}{C - 1} - \widehat{p}(y | x) \right| \\
        \leq&\, \max_{y} |p(y | x) - \widehat{p}(y | x)| < \kappa,
    \end{split}
    \end{align}
    i.e., \(\widehat{p}(y | x) < \kappa + \frac{\pf}{C - 1}\) and we have \(\CE(\widehat{p}(x)\|y) > - \log\left(\kappa + \frac{\pf}{C - 1}\right)\).
    Finally, we obtain separability of losses with true versus false labels in probability if
    \begin{equation}
        1 - \pf - \kappa > \kappa + \frac{\pf}{C - 1} 
        \iff \pf < \frac{C - 1}{C} (1 - 2 \kappa).
    \end{equation}
\end{proof}

\end{document}